\documentclass{article}

\usepackage{microtype}
\usepackage{graphicx}
\usepackage{subcaption}
\usepackage{booktabs} 

\usepackage{hyperref}

\usepackage[accepted]{icml2026}

\usepackage{amsmath}
\usepackage{amssymb}
\usepackage{mathtools}
\usepackage{amsthm}
\usepackage{makecell}
\usepackage{subcaption}
\usepackage{wrapfig}
\usepackage{multirow}
\usepackage{enumitem}
\usepackage{kotex}
\usepackage{IEEEtrantools}

\usepackage[capitalize,noabbrev]{cleveref}

\theoremstyle{plain}
\newtheorem{theorem}{Theorem}[section]

\theoremstyle{definition}

\theoremstyle{remark}

\usepackage[disable,textsize=tiny]{todonotes}

\newcommand{\argmax}{\mathop{\rm argmax}}

\icmltitlerunning{Factorized Scheduling Principle: Learning Interpretable and Transferable Policies via Structured Additive Functions}

\begin{document}

\twocolumn[
\icmltitle{Factorized Scheduling Principle: Learning Interpretable and Transferable Policies via Structured Additive Functions}



  \icmlsetsymbol{equal}{*}

\begin{icmlauthorlist}
\icmlauthor{Hong Je-Gal}{sch}
\icmlauthor{Hyun-Suk Lee}{sch,airi}
\end{icmlauthorlist}

\icmlaffiliation{sch}{Department of Artificial Intelligence and Robotics, Sejong University, South Korea}
\icmlaffiliation{airi}{Artificial Intelligence Robotics Institute (AIRI), Sejong University, South Korea}

\icmlcorrespondingauthor{Hyun-Suk Lee}{hyunsuk@sejong.ac.kr}

\icmlkeywords{Descriptive policy, Dynamic scheduling, Explainable AI, Meta-learning}

  \vskip 0.3in
]



\printAffiliationsAndNotice{}  

\begin{abstract}
Scheduling problems arise from repeatedly selecting one item from a set of candidates based on their states. These problems often reduce to assigning priority scores and choosing the highest-ranked item.
In this work, we propose a factorized scheduling principle (FSP) framework to learn interpretable and transferable scheduling rules. The FSP framework represents system states as condition distributions and decomposes a global scheduling principle into additive univariate and pairwise components with identifiability constraints.
The scheduling principle enables the framework to maintain a simple priority-based structure during deployment. This principle is learned by using a policy-based objective combined with a temporal-difference signal defined on the condition distribution.
Experiments on synthetic and realistic scheduling tasks demonstrate the FSP framework's strong performance, interpretability, and zero-shot generalization across different system scales.
\end{abstract}

\section{Introduction}

Scheduling is a fundamental decision-making problem that arises whenever a system must repeatedly select one item among a set of candidates.
Regardless of the application domain, a large class of scheduling problems follows a common structure: at each decision point, the system evaluates the priority of candidate items based on their conditions and selects one with the highest priority \citep{shani2005mdp,lu2016partially,ferra2003applying,wei2018user,stidham1993survey,han2008resource}.
This structure suggests that scheduling could be reduced to simple priority-based rules in principle \citep{haupt1989survey}.
In practice, however, deploying effective scheduling strategies remains difficult, particularly when system conditions or candidate populations vary.

Traditionally, classical index-based approaches, such as the Gittins and Whittle indices, formalize the concept of item-level priority rules and demonstrate that these rules can yield transparent and reusable decision strategies \citep{whittle1988restless,liu2010indexability,gittins2011multi}. Their item-wise structure yields per-item priority scores, allowing the evaluation independent of the number or ordering of candidates. However, deriving such indices often relies on strong modeling assumptions and low-dimensional structures, which limits their applicability in modern high-dimensional environments \citep{scully2025gittins}. Moreover, extending these indices to new systems usually requires manual redesign or heuristic adaptation \citep{maatouk2020optimality,robledo2024tabular}.

In contrast, recent learning-based approaches, particularly reinforcement learning (RL), provide greater flexibility by optimizing policies directly from experience data \citep{xu2017deep,wei2018user,ye2018deep,huang2021deep}. These policies can achieve strong empirical performance in specific environments. However, they generally operate on joint state representations in which item features are encoded together, tying policy behavior to item ordering or a fixed candidate set size. This entanglement makes the learned decision rule difficult to interpret and brittle when the candidate set changes (e.g., when new items appear or the size varies). As a result, even if the underlying scheduling logic should conceptually remain similar, policies trained in one environment often fail to transfer to systems with different item populations \citep{lee2019resource}.

These limitations reveal a gap between the classical and learning-based approaches.
Classical approaches offer transparency and transferability but rely on restrictive assumptions, while learning-based approaches provide flexibility at the expense of interpretability and transferability.
This raises an attractive but unexplored question: \emph{Can scheduling principles themselves be learned from experience data, while remaining interpretable and reusable across systems?}

In this work, we propose a factorized scheduling principle (FSP) framework to address this question for item-selection scheduling problems, where decisions are naturally governed by relative priorities among candidate items.
The FSP framework learns a \emph{global scheduling principle} defined over item-level features, rather than a black-box policy tied to a specific state representation or candidate set.
Since the principle is defined on item-level features rather than joint state vectors, it naturally generalizes to different candidate sets without depending on item indices or state concatenation. The key idea is that scheduling decisions can be governed by how each item's conditions contribute to its priority, regardless of the identities or number of other items.
The contributions of this paper are summarized as follows:
\begin{itemize}[noitemsep,topsep=0pt]

\item We introduce a novel formulation that learns an interpretable and transferable scheduling principle directly from experience using complementary local and temporal signals.

\item We design a structured additive representation that decomposes the scheduling principle into interpretable feature-level contributions and interactions, while ensuring identifiability through centering constraints.

\item We provide theoretical analysis establishing identifiability, approximation guarantees, ranking consistency, and transfer properties of the learned principle.

\item Through synthetic and realistic scheduling experiments, we demonstrate that the FSP framework achieves strong performance, interpretability, and zero-shot generalization across candidate populations.

\end{itemize}

\paragraph{Conflict of Interest Disclosure.}
The authors declare no financial conflicts of interest related to this work.

\section{Background and Motivation}
\subsection{Motivation for Learning Scheduling Principles}

A wide range of real-world systems rely on scheduling to select an item from a set of candidates, e.g., wireless base stations choosing which user to serve \citep{ferra2003applying,wei2018user}, operating systems deciding which process to execute, manufacturing lines prioritizing jobs \citep{ouelhadj2009survey}, logistics platforms dispatching tasks to workers \citep{delaram2018mathematical,su2025multi}, and recommender systems selecting an item to display \citep{shani2005mdp,lu2016partially,huang2021deep}.
%
Traditionally, scheduling has relied on hand-crafted heuristics (e.g., shortest-job-first in computing and proportional fairness in wireless networks) or analytically derived priority rules (e.g., Gittins index and Whittle index). They are effective and transparent in their intended settings \citep{liu2010indexability,maatouk2020optimality}, but these heuristics are inflexible; their logic does not easily transfer across systems and often requires careful re-engineering.
RL is a flexible alternative that optimizes policies directly from experience data, achieving strong empirical results across application domains \citep{xu2017deep,wei2018user,ye2018deep}.
However, such RL-based schedulers are often make it difficult to interpret their decision logic and sensitive to changes in system configurations or candidate populations \citep{lee2019resource,milani2024explainable}.

These limitations highlight the need for a different paradigm: learning explicit scheduling principles that operate directly on item-level features and remain valid across systems within the same application domain, rather than relying on entangled black-box policies tied to a specific state or candidate set.
Such a scheduling principle should make transparent how each feature and feature interaction contributes to an item's scheduling priority, thereby inducing a consistent ordering over items regardless of the size or composition of the candidate set.
The FSP framework introduced in this paper is designed to achieve this goal. It represents scheduling priorities as a structured additive function over descriptive features. The FSP framework combines the adaptability of learning-based methods with the interpretability and transferability of classical scheduling heuristics.

\subsection{Standard Scheduling Problem}
\label{sec:standard_scheduling_problem}
We model a standard scheduling problem as a discrete-time Markov decision process (MDP) in which, at each step, a system selects an item from the candidate set. Specifically, at time step $t$,
the environment presents a set of candidate items $\mathcal{N} = \{1,\cdots,N\}$. Each candidate item $n$ has a feature vector $\mathbf{x}^t_n=(x_{n,1}^t,\cdots,x_{n,K}^t) \in\mathcal{X}$ (e.g., channel quality, demand level, and queue length), where $\mathcal{X}=[0,1]^K$ is a feature space\footnote{Without loss of generality, we assume that feature vectors are normalized to $[0,1]^K$.}.
The system state is defined as $s^t \in \mathcal{S}$, which consists of all the information required for scheduling.
Typically, it is defined as a concatenated vector of the candidate items' features $(\mathbf{x}^t_1,\cdots,\mathbf{x}^t_N)\in[0,1]^{K\times N}$, which implicitly ties the state representation to both the number and the ordering of candidate items.

The action is defined as $a^t =n\in\mathcal{N}=\mathcal{A}$, where $n$ is the index of the chosen item.
Scheduling the item $a^t$ yields a reward $r(s^t,a^t)$ and transitions to a next state distribution $s_{t+1} \sim P(\cdot | s^t,a^t)$.
The goal of the problem is to find a policy $\pi(a|s)$ that maximizes the expected
discounted return:
\begin{equation}
\max_{\pi} J(\pi) = \mathbb{E}_\pi\!\left[\sum_{t=0}^{\infty} \gamma^{t} r(s^t,a^t)\right],
\end{equation}
where $\gamma$ is a discount factor.

This standard scheduling formulation is convenient because each candidate item is directly represented by an index in the action space. For this reason, it is widely adopted across a broad range of application domains. \citep{stidham1993survey,chang2000line,ferra2003applying,shani2005mdp,han2008resource,lu2016partially,xu2017deep,wei2018user,ye2018deep,huang2021deep}.
However, it suffers from several critical limitations:

\begin{itemize}[noitemsep,topsep=0pt]
\item \textbf{No explicit principle}:
The learning objective focuses on estimating full state-action value functions $Q(s,a)$ or policies $\pi(a|s)$ that depend on the entire candidate set and its joint configuration.
As a result, the effect of an individual item's feature vector cannot be isolated into a shared, item-level priority rule.
This prevents the emergence of a global scheduling principle that can be consistently evaluated across different candidate sets.

\item \textbf{Lack of interpretability}:
The learned $Q$ function or policy is a high-dimensional black-box mapping from a joint state representation to actions.
Since the contribution of individual features or feature interactions is not explicitly parameterized, it is difficult to attribute scheduling decisions to specific feature conditions or to analyze the learned behavior analytically.

\item \textbf{Poor transferability}:
Since actions are defined by item indices and the policy is trained on a fixed candidate set size $N$, the learned strategy is inherently coupled to the training distribution of candidate sets.
Even if the underlying task structure remains unchanged, the policy may not preserve the same scheduling strategy, if the number of items, their ordering, or their feature distribution changes.
\end{itemize}
These limitations motivate our alternative framework, which factorizes scheduling decisions over item features and their interactions.
Rather than learning an index-specific action map tied to a fixed candidate set, FSP learns a priority rule defined in feature space, allowing it to be evaluated on arbitrary candidate sets and yielding an interpretable, transferable global scheduling principle.

\section{Proposed Framework: Factorized Scheduling Principle}

Here, we present an FSP framework built on the idea that scheduling policies should be guided by an interpretable and transferable \emph{global scheduling principle} $S:\mathcal{X}\to\mathbb{R}$, rather than by opaque black-box models.
Beyond the empirical evidence from classical heuristics, the structure of scheduling problems naturally suggests the existence of such a principle: decisions are made by selecting an item based on its feature conditions, which largely determine the resulting reward.
Accordingly, the priority of an item can be described as a function $S(x)$ of its feature vector, which is evaluated consistently regardless of the particular candidate set.

\begin{figure*}[t]
  \centering
  \includegraphics[width=0.95\linewidth]{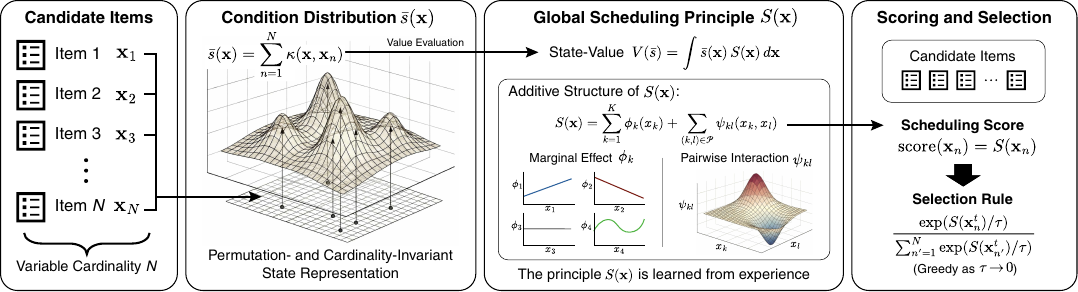}
  \caption{Illustrative overview of the FSP framework.}
  \label{fig:fsp_overview}
\vspace*{-1em}
\end{figure*}

In what follows, we describe the FSP framework by detailing (i) how the state is represented as a distribution of item features, (ii) how the global principle is defined as a structured additive function, (iii) how candidate items are scored and selected, and (iv) how the global principle is learned from experience.
We provide a high-level overview of the FSP framework in Figure~\ref{fig:fsp_overview}. The following subsections introduce each component in detail.

\subsection{Condition Distribution as State Representation}

At time step $t$, the system is characterized by a collection of item-level feature vectors $\{\mathbf{x}_n^t\}_{n \in \mathcal{N}}$, where $\mathbf{x}_n^t \in [0,1]^K$ describes the condition of item $n$. However, as mentioned in Section \ref{sec:standard_scheduling_problem}, this representation is not invariant to the identity or ordering of items.
Therefore, a state representation is needed that can appropriately describe the scheduling priorities over items while being invariant to permutations and accommodating variable-sized candidate sets.

To this end, we express the state as a \textit{condition distribution} over the feature space:
\vspace{-3pt}
\[
\bar{s}^t(\mathbf{x}) = \sum_{n=1}^{N} \delta(\mathbf{x}-\mathbf{x}_n^t),
\]
where $\delta(\cdot)$ is a Dirac delta function.
This captures the current population of items as point masses at their feature locations. Since scheduling priorities are functions of item feature conditions, it is natural to represent the system state in terms of the distribution of item features. Furthermore, unlike a concatenated feature vector $(\mathbf{x}_1^t,\cdots,\mathbf{x}_N^t)$, this distributional representation remains valid regardless of the number of items, $N$, and it abstracts away item identities, including their indices, while retaining the relevant conditions for scheduling.
In practice, each delta function can be replaced with a smooth kernel bump function $\kappa(\mathbf{x},\mathbf{x}_n^t)$, yielding
\vspace{-3pt}
\[
\bar{s}^t(\mathbf{x}) = \sum_{n=1}^{N} \kappa(\mathbf{x},\mathbf{x}_n^t).
\]
This smooth density field avoids discontinuities, supports
differentiable approximations, and interacts well with basis-function
parameterizations of the global principle.
Consequently, this condition distribution $\bar{s}^t(\mathbf{x})$ can describe candidate items in a stable and transferable manner. It is not intended to redefine the underlying MDP, but rather to provide a transferable description of candidate conditions.

\subsection{Global Scheduling Principle}

We define a function $S:\mathcal{X} \to \mathbb{R}$, called the global scheduling principle, that encodes the priority associated with any feature condition $\mathbf{x}$. Unlike state-specific value functions from the standard MDP perspective, this principle is intended to capture general rules of scheduling that can be applied across different candidate sets and environments.
The goal of the FSP framework is to learn $S(\mathbf{x})$ in a way that explains why certain items are preferred under specific feature conditions, and that generalizes across systems with different numbers or identities of items.
First, we list design goals that a good structure of $S$ should satisfy:
\begin{itemize}[noitemsep,topsep=0pt]
\item \textbf{Continuity}: Representing smooth effects of feature levels, avoiding discretization artifacts.
\item \textbf{Interpretability}: Decomposing into components that can be visualized as human-readable geometrization.
\item \textbf{Transferability}: Remaining applicable when the number or identity of items changes.
\item \textbf{Identifiability}: Ensuring that marginal and interaction effects are uniquely separated without overlap.
\end{itemize}
To achieve these goals, we introduce a structured additive formulation with identifiability constraints. 
The additive structure introduces an explicit inductive bias toward reusable, item-level scheduling rules, while remaining expressive enough to capture feature interactions.
Its smooth one- and two-dimensional components provide continuity and interpretability, while its definition on item-level features ensures transferability.
The centering constraints guarantee identifiability.
We parameterize $S$ as a structured additive function as
\vspace{-0.5em}
\[
S(\mathbf{x}) = \sum_{k=1}^{K} \phi_k(x_k)+ \sum_{(k,l)\in\mathcal{P}} \psi_{kl}(x_k,x_l),
\]
where $\phi_k$ are one-dimensional curves (marginal effects for each feature), $\mathcal{P}$ denotes the set of feature pairs with interactions, and $\psi_{kl}$ are two-dimensional surfaces (interactions for selected pairs).
To ensure identifiability (i.e., uniquely separate marginal and interaction effects), we consider centering constraints under the measure on $[0,1]$ as follows:
\begin{eqnarray}
&&\textrm{(mean)}\quad
\int_0^1 \phi_{k}(x_k)\,dx_k = 0,
\label{eqn:mean_centering} \\
&&\textrm{(row)}\quad
\int_0^1 \psi_{kl}(x_k,x_l)\,dx_k = 0,
\label{eqn:row_centering} \\
&&\textrm{(column)}\quad
\int_0^1 \psi_{kl}(x_k,x_l)\,dx_l = 0,
\label{eqn:column_centering} \\
&&\textrm{(global)}\quad
\int_0^1\!\!\int_0^1 \psi_{kl}(x_k,x_l)\,dx_k dx_l = 0.
\label{eqn:global_centering}
\end{eqnarray}

These centering conditions ensure that $\psi_{kl}$ captures only pure interactions beyond the one-dimensional marginals $\phi_k$, without ambiguity from additive shifts.

In practice, we can use smooth function bases, such as spline or lattice parameterizations, to parameterize the one- and two-dimensional components $\{\phi_k\}$ and $\{\psi_{kl}\}$. These parameterizations are well-suited for representing continuous feature effects, enable direct visualization of marginal and interaction components, and admit well-established approximation guarantees. It is worth noting that the choice of spline or lattice bases is not essential, but provides a concrete instantiation that supports both interpretability and the theoretical analysis presented later.

\subsection{Scheduling Scoring and Selection Rule}

Scheduling requires evaluating a variable set of candidate items in a consistent way; using the global principle $S(\mathbf{x})$ over the feature space, each item can be scored directly by its feature vector.
For a candidate item $n$ with $\mathbf{x}_n^t$, the scheduling score is simply given by
\[
\text{score}(\mathbf{x}_n^t) = S(\mathbf{x}_n^t).
\]
This connects the global scheduling principle to the local decision: the
policy prefers items whose feature conditions are globally assessed as
high priority.
%
Then, we can define the value of a condition distribution $\bar{s}^t$ as the sum of item-level scores:
\vspace{-0.5em}
\[
V(\bar{s}^t) = \int \bar{s}^t(\mathbf{x})\, S(\mathbf{x})\, d\mathbf{x} = \sum_{n=1}^{N} S(\mathbf{x}_n^t),
\]
where alternative normalizations (e.g., averaging) can be used without changing the induced item ranking.
This ensures that the evaluation of states is consistent with the
evaluation of their constituent items.

With this scheduling scoring, we adopt a softmax policy over item scores as
\begin{equation}
\pi(n | S, \{\mathbf{x}_n^t\}_{n\in\mathcal{N}}) = \frac{\exp(S(\mathbf{x}_n^t)/\tau)}{\sum_{n'=1}^{N} \exp(S(\mathbf{x}_{n'}^t)/\tau)},
\end{equation}
where $\tau$ is the temperature parameter that balances exploitation and exploration.
As $\tau\to 0$, the policy becomes greedy and selects the highest-scoring item, while larger values of $\tau$ induce stochasticity, allowing exploration of alternative actions.
In this policy, every decision can be explained by the feature levels or interactions that contribute to the item's priority since each score decomposes via the functions $\phi_k$ and $\psi_{kl}$, making the policy transparent. We can inspect why a particular item was chosen by linking it directly to the interpretable components of the global principle.

\subsection{Learning the Global Scheduling Principle}

In the FSP framework, the goal of training is not to obtain an unbiased policy gradient estimator, but to directly shape the global scheduling principle $S(\mathbf{x})$ through complementary local and temporal signals.
Instead of training a separate policy and value function, the framework derives them from the same principle, making the learning process more interpretable.

At each time step $t$, an experience is obtained, which is defined as a tuple $(\bar{s}^t, n^t, r^t,\bar{s}^{t+1})$ of the current condition distribution, chosen item, observed reward, and the next condition distribution.
Using the experience, we employ two complementary loss terms to learn the global scheduling principle $S$. First, 
a policy likelihood loss encourages $S$ to assign high scores to
actions that yielded high reward:
\[
\mathcal{L}_{\text{lik}}
= -\, r^t \log \pi(n|S, \{\mathbf{x}_n^t\}_{n\in\mathcal{N}}),
\]
where $\pi$ is the softmax policy induced by $S(\mathbf{x}^t_n)$'s.
Also, a value regression loss enforces temporal consistency of the global value $V(\bar{s}^t)$ with temporal difference targets:
\[
\mathcal{L}_{\text{val}} = (y^t - V(\bar{s}^t))^2,
\]
where $y^t = r^t + \gamma V(\bar{s}^{t+1})$ is a target value.
%
Furthermore, to maintain interpretability and identifiability, we consider the structural regularization terms, including $l_2$ penalties on the coefficients of additive functions to control complexity and centering penalties of $\psi_{kl}$ to ensure identifiability.

The training loss for the global scheduling principle $S$ is given by
\begin{equation}
\label{eqn:loss}
\mathcal{L}
= \mathcal{L}_{\text{lik}} + \lambda \,\mathcal{L}_{\text{val}}
+ \Omega(\phi,\psi),
\end{equation}
where $\lambda$ is a hyperparameter that controls the trade-off between the likelihood loss and the value regression loss and $\Omega$ collects regularization terms.
The parameters of the structured additive function $S$, i.e., $\{\phi,\psi\}$, are updated via stochastic gradient descent using the loss. This procedure directly trains the global scheduling principle, yielding both interpretability and transferability.

\paragraph{Why combine the two losses?}
We consider these two losses since they provide complementary learning signals.
$\mathcal{L}_{\text{lik}}$ shapes the \emph{local ranking of items} within each condition distribution by directly linking observed rewards to chosen actions. On the other hand, $\mathcal{L}_{\text{val}}$ enforces \emph{global consistency} across time by aligning state values with long-term returns. Using only policy likelihood loss captures immediate preferences across the candidate items, but it ignores future consequences. Using only value regression loss yields consistent state values, but it may not prioritize items correctly.
By combining both losses, the learned principle $S(\mathbf{x})$ faithfully reflects both local action-level preferences and global temporal structure.

\subsection{Theoretical Analysis}

We provide theoretical analysis of the FSP framework with respect to interpretability, performance, and transferability.
We assume the existence of the ground-truth principle $S^\star$ and establish guarantees under mild regularity assumptions.
These assumptions are used to state clean theoretical guarantees. In realistic coupled systems, an exact additive ground-truth principle may not exist; in such cases, FSP should be interpreted as learning an approximate surrogate principle that captures the dominant reusable priority structure.
The detailed technical assumptions and proofs are provided in Appendix~\ref{appendix:theory}.

\paragraph{Interpretability.}
A requirement for interpretable scheduling principles is that their
decomposition into one- and two-dimensional components be well-defined and
meaningfully inspected as shown in Theorem \ref{thm:identifiability}.

\begin{theorem}[Identifiability]
\label{thm:identifiability}
Under the centering constraints in Equations \eqref{eqn:mean_centering}--\eqref{eqn:global_centering}, the additive scheduling principle $S(\mathbf{x})=\sum_k \phi_k(x_k)+\sum_{(k,l)\in\mathcal{P}}\psi_{kl}(x_k,x_l)$ is unique.
\end{theorem}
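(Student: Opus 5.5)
The uniqueness claim is really a statement about the decomposition. If two centered families $\{\phi_k,\psi_{kl}\}$ and $\{\tilde\phi_k,\tilde\psi_{kl}\}$ produce the same function $S$ on $[0,1]^K$, then $\phi_k=\tilde\phi_k$ and $\psi_{kl}=\tilde\psi_{kl}$ for every $k$ and every $(k,l)\in\mathcal{P}$. Equality is almost everywhere, or everywhere if the components are continuous, as spline or lattice bases ensure. The plan is to pass to differences $\alpha_k=\phi_k-\tilde\phi_k$ and $\beta_{kl}=\psi_{kl}-\tilde\psi_{kl}$. By linearity these satisfy the same centering constraints \eqref{eqn:mean_centering}--\eqref{eqn:global_centering}, and
\[
D(\mathbf{x}) := \sum_{k}\alpha_k(x_k)+\sum_{(k,l)\in\mathcal{P}}\beta_{kl}(x_k,x_l)=0 \quad \text{on } [0,1]^K.
\]
It therefore suffices to show that every $\alpha_k$ and every $\beta_{kl}$ vanishes. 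This is the standard functional-ANOVA (Hoeffding) argument under the uniform product measure on $[0,1]^K$.

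\textbf{Step 1 (marginals).} Fix a coordinate $j$ and integrate $D$ over all coordinates except $x_j$ with respect to Lebesgue measure on $[0,1]^{K-1}$. Each $\alpha_k$ with $k\neq j$ integrates to zero by \eqref{eqn:mean_centering}. For each pair $(k,l)$, at least one of its coordinates is integrated out. If $j\notin\{k,l\}$, both are integrated out and \eqref{eqn:global_centering} applies. If $j=k$, integrating over $x_l$ kills $\beta_{kl}$ by \eqref{eqn:column_centering}. If $j=l$, integrating over $x_k$ kills it by \eqref{eqn:row_centering}. The only surviving term is $\alpha_j(x_j)$, so $\alpha_j=0$ for all $j$.

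\textbf{Step 2 (interactions).} With all marginals gone, fix a pair $(k,l)\in\mathcal{P}$ and integrate $D$ over all coordinates other than $x_k,x_l$. Any other pair $(k',l')\neq(k,l)$ has at least one coordinate outside $\{k,l\}$, so it is integrated out. The same row, column, or global centering identities then kill it. This leaves $\beta_{kl}(x_k,x_l)=0$. Here one assumes, as is implicit, that $\mathcal{P}$ contains each unordered pair at most once.

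The only real obstacle is the measure-theoretic bookkeeping. Fubini must be justified, so I would assume the components are in $L^1$ (bounded, which holds for splines or lattices). The conclusion is almost-everywhere equality, upgraded to pointwise equality under continuity. I would also state explicitly that uniqueness holds up to the absence of a global constant. The parameterization has no intercept, and \eqref{eqn:mean_centering} forces any constant to be zero, so $S$ determines all its components.
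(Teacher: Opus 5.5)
Your proposal is correct and follows the paper's proof essentially step for step. You form the difference of two centered decompositions, apply the one-coordinate marginal projection to kill each univariate difference via the mean, row/column, and global centering constraints, and then apply the two-coordinate projection to kill each pairwise difference. Your extra bookkeeping (Fubini under integrability, each pair appearing once in $\mathcal{P}$, a.e.\ versus pointwise equality) only makes explicit what the paper leaves implicit.
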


\begin{table*}[t]
\centering
\caption{Comparison of learned representations across scheduling approaches.}
\label{table:rw_summary}
\begin{small}
\begin{sc}
\begin{tabular}{cccc}
\toprule
\textbf{Approach} 
& \textbf{Learned representation} 
& \textbf{Transferability} 
& \textbf{Interpretability} \\
\midrule
Index policies     
& Analytically defined index       
& Limited
& High \\

Index learning     
& Approximated index           
& Limited 
& Low--Medium \\

RL scheduling      
& Policy or $Q(s,a)$       
& No       
& Low \\

Structured/Additive RL
& Additive policy or value
& No
& Medium--High \\

\textbf{FSP (ours)}
& Priority principle $S(x)$
& Explicitly considered
& High \\
\bottomrule
\end{tabular}
\end{sc}
\end{small}
\vspace{-1em}
\end{table*}

\paragraph{Performance.}
The approximation theorem in Theorem \ref{thm:approximation} shows that the concrete instantiations of the structured additive principle, such as spline or lattice parameterizations, can approximate any sufficiently smooth ground-truth principle with controllable error.
Based on this precise approximation, the learned principle should preserve the ranking of candidate items and deliver near-optimal scheduling performance as stated in Theorem \ref{thm:ranking}.

\begin{theorem}[Approximation error]
\label{thm:approximation}
For any Lipschitz-continuous ground-truth principle $S^\star$ with bounded second mixed derivatives, there exists an additive scheduling principle $\hat S$ parameterized with spline and lattice basis functions such that $\|S^\star - \hat S\|_\infty = O(R^{-2})$, where $R$ is the resolution of the basis functions.
\end{theorem}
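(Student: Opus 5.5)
The natural reading of Theorem~\ref{thm:approximation}, consistent with the caveat preceding the theoretical analysis, is that $S^\star$ itself admits the structured additive form
\[
S^\star(\mathbf{x})=\sum_k \phi^\star_k(x_k)+\sum_{(k,l)\in\mathcal{P}}\psi^\star_{kl}(x_k,x_l),
\]
with each component Lipschitz and with bounded second (mixed) derivatives. Without this, a general $K$-variate function cannot be approximated uniformly by one- and two-dimensional pieces, so I will state it as part of the assumptions. The plan is to approximate each component separately and combine the errors with the triangle inequality. The total error is then bounded by a sum of $K+|\mathcal{P}|$ component errors, a finite number independent of $R$. It therefore suffices to show that each univariate and bivariate component can be approximated in sup norm at rate $O(R^{-2})$, and that the centering constraints \eqref{eqn:mean_centering}--\eqref{eqn:global_centering} can be imposed without changing this rate.

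\textbf{Univariate components.} For each $\phi^\star_k$, take the piecewise-linear spline interpolant $I_R\phi^\star_k$ on the uniform grid $\{0,1/R,\dots,1\}$. The classical interpolation estimate gives
\[
\|\phi^\star_k - I_R\phi^\star_k\|_\infty \le \tfrac{1}{8}R^{-2}\|\phi_k^{\star\prime\prime}\|_\infty .
\]
A cubic spline quasi-interpolant would also work, but linear interpolation already attains the rate.

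\textbf{Bivariate components.} For each $\psi^\star_{kl}$, use the tensor-product bilinear (lattice) interpolant $I_R\otimes I_R$ on the $R\times R$ grid. I would write the error as
\[
f-(I_R\otimes I_R)f=(f-I_R^{(k)}f)+I_R^{(k)}\bigl(f-I_R^{(l)}f\bigr).
\]
The first term is bounded by $\tfrac18R^{-2}\|\partial_{kk}f\|_\infty$. For the second, $I_R$ is a positive operator with sup-norm equal to $1$, so it is bounded by $\tfrac18R^{-2}\|\partial_{ll}f\|_\infty$. This yields $O(R^{-2})$ with a constant depending only on the second derivatives of $\psi^\star_{kl}$. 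This is the step where the bounded second-derivative hypothesis is used; the mixed derivative $\partial_{kl}$ would also be needed if one chose a hierarchical, sparse-grid variant instead.

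\textbf{Centering, the main obstacle.} The interpolants need not satisfy \eqref{eqn:mean_centering}--\eqref{eqn:global_centering}. I would first try re-projecting them: subtract from $I_R\phi^\star_k$ its mean, and apply the two-dimensional ANOVA projection to $(I_R\otimes I_R)\psi^\star_{kl}$. That projection subtracts the row means and column means and adds back the grand mean. The row and column means are univariate piecewise-linear functions, and the grand mean is a constant, so the projected functions remain in the spline and lattice spaces. The projections are also bounded in sup norm with operator norm at most $4$. Because $\phi^\star_k$ and $\psi^\star_{kl}$ are already centered, projection leaves them fixed. Hence the projected approximation error is at most $4$ times the unprojected error, still $O(R^{-2})$.

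If $S^\star$ is supplied uncentered, I would first recenter it using the uniqueness from Theorem~\ref{thm:identifiability}. The removed marginal means are absorbed into the $\phi_k$, and a single global constant is collected. That constant does not affect rankings and can be appended as an intercept. Summing the component bounds gives $\|S^\star-\hat S\|_\infty\le C R^{-2}$, where $C$ depends on $K$, $|\mathcal{P}|$, and the second-derivative bounds.
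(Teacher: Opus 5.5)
Your proposal is correct and takes the same route as the paper, which approximates each univariate component by a cubic spline interpolant and each bivariate component by a tensor-product spline at rate $O(h^2)$, $h=1/R$, under $C^2$/$C^{2,2}$ smoothness, then sums the errors; your piecewise-linear/bilinear interpolants with explicit constants are an equally valid instantiation. Beyond the paper, you make explicit that $S^\star$ must itself be additive over $\mathcal{P}$ (the paper leaves this implicit in Assumption (A4)), and you show the centering constraints can be restored by the ANOVA projection at a constant-factor cost; one small correction is that recentering an uncentered additive $S^\star$ relies on the existence of a centered decomposition, not on the uniqueness given by Theorem~\ref{thm:identifiability}.
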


\begin{theorem}[Ranking consistency \& Greedy regret]
\label{thm:ranking}
Suppose the learned principle $\hat S$ satisfies $\|S^\star-\hat S\|_\infty \le \epsilon$. 
Then, for any features of the candidate items, if $S^\star(\mathbf{x}_i) > S^\star(\mathbf{x}_j)+2\epsilon$, we have $\hat S(\mathbf{x}_i) > \hat S(\mathbf{x}_j)$. Therefore, the principle-based ordering is preserved up to a $2\epsilon$ margin.
Furthermore, let $\mathbf{x}^\star=\argmax_n S^\star(\mathbf{x}_n)$ and $\hat{\mathbf{x}}=\argmax_n \hat{S}(\mathbf{x}_n)$. 
Then, the instantaneous regret of scheduling scores satisfies
$
S^\star(\mathbf{x}^\star)-S^\star(\hat{\mathbf{x}})\;\le\;2\epsilon.
$
\end{theorem}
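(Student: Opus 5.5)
The plan is a direct sandwich argument built on the uniform bound $\|S^\star-\hat S\|_\infty\le\epsilon$. Its pointwise form is
\[
S^\star(\mathbf{x})-\epsilon\;\le\;\hat S(\mathbf{x})\;\le\;S^\star(\mathbf{x})+\epsilon \qquad\text{for every } \mathbf{x}\in\mathcal{X}.
\]
Apply this at the candidate features. Nothing about the additive structure, the centering constraints of Theorem~\ref{thm:identifiability}, or the approximation rate of Theorem~\ref{thm:approximation} is needed. The statement is purely about two real-valued functions that are uniformly $\epsilon$-close on the finite candidate set. Theorem~\ref{thm:approximation} only enters afterwards, as a way to instantiate $\epsilon=O(R^{-2})$.

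\textbf{Ranking consistency.} Fix candidates $\mathbf{x}_i,\mathbf{x}_j$ with $S^\star(\mathbf{x}_i)>S^\star(\mathbf{x}_j)+2\epsilon$. Use the lower half of the sandwich at $\mathbf{x}_i$ and the upper half at $\mathbf{x}_j$:
\[
\hat S(\mathbf{x}_i)\ge S^\star(\mathbf{x}_i)-\epsilon > S^\star(\mathbf{x}_j)+\epsilon\ge \hat S(\mathbf{x}_j).
\]
This gives the strict inequality $\hat S(\mathbf{x}_i)>\hat S(\mathbf{x}_j)$, so any pair separated by more than $2\epsilon$ under $S^\star$ keeps its order under $\hat S$.

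\textbf{Greedy regret.} Let $\mathbf{x}^\star$ maximize $S^\star$ over the candidate set and let $\hat{\mathbf{x}}$ maximize $\hat S$ over the same set. Ties may be broken arbitrarily; the set is finite, so both maxima exist. Optimality of $\hat{\mathbf{x}}$ for $\hat S$ gives $\hat S(\hat{\mathbf{x}})\ge \hat S(\mathbf{x}^\star)$. Combining this with both halves of the sandwich,
\[
S^\star(\hat{\mathbf{x}})\ge \hat S(\hat{\mathbf{x}})-\epsilon\ge \hat S(\mathbf{x}^\star)-\epsilon\ge S^\star(\mathbf{x}^\star)-2\epsilon,
\]
which rearranges to the claimed $S^\star(\mathbf{x}^\star)-S^\star(\hat{\mathbf{x}})\le 2\epsilon$.

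An alternative route to the regret bound is by contradiction from the ranking part: if the regret exceeded $2\epsilon$, then $\mathbf{x}^\star$ would strictly beat $\hat{\mathbf{x}}$ under $\hat S$, contradicting the choice of $\hat{\mathbf{x}}$. I will present the direct chain because it does not depend on how ties are broken.

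There is no real obstacle here. The only point needing care is to state that the argmax is taken over the finite candidate set $\{\mathbf{x}_n\}_{n\in\mathcal{N}}$ with an arbitrary tie-breaking rule, and to note that the bound holds for every such choice.
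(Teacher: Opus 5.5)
Your proposal is correct and follows the paper's proof essentially line for line. The paper gets ranking consistency from $\hat S(\mathbf{x}_i)-\hat S(\mathbf{x}_j)\ge S^\star(\mathbf{x}_i)-S^\star(\mathbf{x}_j)-2\epsilon$, and gets the regret bound from the same chain $S^\star(\hat{\mathbf{x}})\ge \hat S(\hat{\mathbf{x}})-\epsilon\ge \hat S(\mathbf{x}^\star)-\epsilon\ge S^\star(\mathbf{x}^\star)-2\epsilon$ that you wrote.
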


\paragraph{Transferability.} A key goal of the FSP framework is to generalize across different system characteristics (e.g., candidate sets with different sizes or item identities and different marginal feature distributions) under a shared supervision mechanism (e.g., reward structures or scheduling strategies). 
The principle invariance in Theorem \ref{thm:transfer} shows that a shared conditional supervision mechanism ensures that the learned scheduling principle is invariant to system characteristics.

\begin{theorem}[Principle invariance]
\label{thm:transfer}
Let $Y$ denote the supervision signal.
Suppose two systems $A$ and $B$ have different system characteristics, but share the same conditional supervision mechanism, i.e., for all $\mathbf{x}\in\mathcal{X}$, $\mathbb{P}_A(Y| X=\mathbf{x})=\mathbb{P}_B(Y| X=\mathbf{x})$. 
If the loss function is strictly convex, then the unique minimizer $S^\star$ of the expected loss (population risk) between $S$ and $Y$ coincides across systems $A$ and $B$, and depends only on the conditional law $Y| X$.
\end{theorem}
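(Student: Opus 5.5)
The plan is to write the population risk conditionally on the features and then minimize pointwise in $\mathbf{x}$. Fix a loss $\ell(s,y)$, strictly convex in its first argument $s$. For a system $M\in\{A,B\}$, the population risk of a principle $S$ is
\[
\mathcal{R}_M(S)=\mathbb{E}_{(X,Y)\sim\mathbb{P}_M}\bigl[\ell(S(X),Y)\bigr].
\]
By the tower property this equals
\[
\mathcal{R}_M(S)=\int_{\mathcal{X}} g_M\bigl(S(\mathbf{x}),\mathbf{x}\bigr)\,d\mathbb{P}_M^X(\mathbf{x}),
\qquad
g_M(s,\mathbf{x})=\mathbb{E}_{\mathbb{P}_M}\bigl[\ell(s,Y)\mid X=\mathbf{x}\bigr].
\]
The hypothesis $\mathbb{P}_A(Y\mid X=\mathbf{x})=\mathbb{P}_B(Y\mid X=\mathbf{x})$ then gives $g_A=g_B=:g$ for every $\mathbf{x}$. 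So the two systems differ only in the marginal feature distributions $\mathbb{P}_A^X$ and $\mathbb{P}_B^X$, which enter purely as weights.

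Next I would show that $s\mapsto g(s,\mathbf{x})$ is strictly convex, since it is an average of strictly convex functions. Assuming coercivity or finiteness (e.g.\ squared loss with $\mathbb{E}[Y^2\mid X]<\infty$), it therefore has a unique minimizer
\[
s^\star(\mathbf{x})=\arg\min_s g(s,\mathbf{x}).
\]
For squared loss this is $s^\star(\mathbf{x})=\mathbb{E}[Y\mid X=\mathbf{x}]$. I would define $S^\star(\mathbf{x}):=s^\star(\mathbf{x})$ and check that it is measurable, via a measurable-selection argument or directly for the squared loss. Because $S^\star$ is defined only through $g$, it depends only on the conditional law $Y\mid X$. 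For any admissible $S$ the pointwise inequality $g(S(\mathbf{x}),\mathbf{x})\ge g(S^\star(\mathbf{x}),\mathbf{x})$ holds, and integrating it against either marginal shows that $S^\star$ minimizes both $\mathcal{R}_A$ and $\mathcal{R}_B$.

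Uniqueness comes from strict convexity. If $S$ attains the minimum, the integrand gap $g(S(\mathbf{x}),\mathbf{x})-g(S^\star(\mathbf{x}),\mathbf{x})$ is nonnegative and integrates to zero, so it vanishes $\mathbb{P}_M^X$-a.e. Uniqueness of the pointwise minimizer then forces $S=S^\star$ almost everywhere.

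I expect the main obstacle to be reconciling the supports. Uniqueness holds only on the support of each marginal, and the systems may have different feature distributions. I would therefore add the assumption that $\mathbb{P}_A^X$ and $\mathbb{P}_B^X$ are mutually absolutely continuous, for instance both having positive density on $\mathcal{X}$, or else state that the minimizers coincide on the common support. A second subtlety is that the minimization may run over the additive class rather than all measurable functions. I would handle this by assuming $S^\star$ is itself additive (realizability, in line with the ground-truth assumption above). Then the unrestricted minimizer lies in the class and is the unique minimizer there too, and Theorem~\ref{thm:identifiability} makes its components $\phi_k,\psi_{kl}$ identical across $A$ and $B$.
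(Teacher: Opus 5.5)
Your proposal is correct and follows the same route as the paper's proof. Both write the population risk as $\mathbb{E}_{X}[L_X(S(X))]$, where the conditional risk $L_{\mathbf{x}}(z)=\mathbb{E}[\ell(z,Y)\mid X=\mathbf{x}]$ is shared by both systems, then minimize pointwise independently of the marginal and invoke identifiability for the components. Your extra hypotheses are exactly what the paper's proof leaves implicit: an attainment condition such as coercivity (finiteness alone does not give existence), a.e.\ uniqueness reconciled via mutually absolutely continuous marginals, and realizability of the pointwise minimizer in the additive class (without which the best additive fit would in general depend on $\mathbb{P}^X$).
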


%

\section{Related Work}

Classical index-based scheduling originates from the multi-armed bandit literature. The Gittins index provides an optimal priority rule for rested bandits by assigning a scalar index to each arm based solely on its own state \citep{gittins2011multi,scully2025gittins}. The Whittle index extended this idea to restless bandits via Lagrangian relaxation to derive heuristic index policies \citep{whittle1988restless}. Despite lacking general optimality guarantees, it has been widely applied in stochastic scheduling \citep{liu2010indexability,maatouk2020optimality}. To relax the analytical requirements of closed-form indices, subsequent work has explored learning or approximating index functions from data \citep{fu2019towards,xiong2023finite,robledo2024tabular}. While these methods preserve the interpretability of index-based rules, they still rely on indexability assumptions or treat the learned function as a direct proxy for an explicit index.

Learning-based scheduling has been widely studied through RL, where policies or value functions are optimized directly from experience data. Such approaches have demonstrated strong empirical performance in various domains, including recommender systems, wireless scheduling, and job-shop environments \citep{xu2017deep,wei2018user,ye2018deep,huang2021deep}. However, RL-based schedulers typically operate on joint state representations and learn black-box policies or $Q$-functions, making the learned decision logic difficult to interpret and sensitive to changes in candidate sets \citep{lee2019resource,milani2024explainable}.

To address these limitations, several works incorporate structured and additive models, such as generalized additive models \citep{hastie2017generalized} and neural additive models \citep{agarwal2021neural}, into learning-based scheduling to improve transparency while retaining learning-based optimization. Similar ideas have been applied to sequential decision problems by decomposing value or action-value functions to reveal feature-level effects in learned policies \citep{siems2023interpretable,danach2025toward}. Despite improved interpretability, these approaches typically treat the learned structure as a proxy for value estimation or embed it within policy optimization. As a result, they do not explicitly define or learn a transferable scheduling principle that can be applied independently of the policy or value representation.
In contrast, our FSP framework explicitly learns a global priority
principle $S(x)$ and enforces desired temporal properties.
Table~\ref{table:rw_summary} summarizes these distinctions.

\vspace{-2pt}
\section{Experiments}
\subsection{Synthetic Experiment}
\label{subsec:synthetic_experiment}

We first validate the proposed framework on a controlled synthetic
environment to examine whether the learning procedure can
recover a known global scheduling principle.
We consider a scheduling system with 8 candidate items (i.e., $N=8$), with a four-dimensional feature space $\mathbf{x}=(x_1,x_2,x_3,x_4)\in[0,1]^4$.
At each time step $t$, the environment presents the features of item $n$ $\{\mathbf{x}_n^t\}$ drawn i.i.d. from the uniform distribution on $[0,1]^4$.
%
We define a synthetic global scheduling principle
\begin{align}
S^\star(\mathbf{x})=~&2\sin( x_1 \pi) + 0.5(x_2-0.5)^2 + \nonumber \\ &1.5\max(x_3-0.3,0)+0.2-0.8(x_4-0.5)^2+ \nonumber \\
&0.6\sin(2x_1 \pi)(x_2-0.5).\nonumber 
\end{align}
This additive form captures nonlinear marginal effects of each feature and an interaction term of $x_1$ and $x_2$.
When the policy selects item $n^t$ with feature $\mathbf{x}_{n^t}^t$, the reward
is generated as
$r^t = S^\star(\mathbf{x}_{n^t}^t) + \epsilon^t$, where $\epsilon^t \sim \mathcal{N}(0,\sigma^2)$.
This ensures that the expected reward is aligned with the true
principle, while introducing stochasticity.

The FSP framework parameterizes $S(\mathbf{x})$ as a structured additive function using
spline bases for each coordinate. The scheduling principle $\hat{S}$ is learned, i.e., the parameters are updated, by minimizing
the combined likelihood and value regression loss in Eq. \eqref{eqn:loss}.
We use the averaged state-value $V(\bar{s}^t)=\tfrac{1}{N^t}\sum_n S(\mathbf{x}_n^t)$ to stabilize training.
%

\begin{figure}[h]
  \centering
  \includegraphics[width=0.48\linewidth]{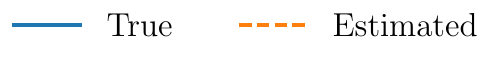} \\
  \begin{subfigure}{0.48\linewidth}
    \centering
    \includegraphics[width=\linewidth]{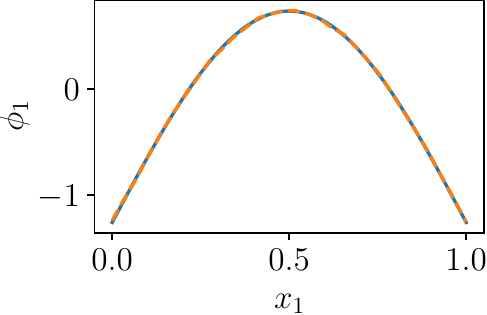}
    \subcaption{$\phi_1$}
  \end{subfigure}\hfill
  \begin{subfigure}{0.48\linewidth}
    \centering
    \includegraphics[width=\linewidth]{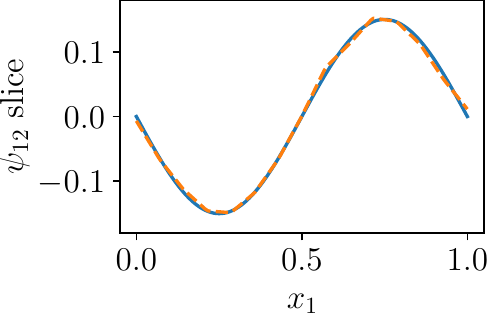}
    \subcaption{$\psi_{12}$ with $x_2=0.25$}
  \end{subfigure}
  \caption{Visual comparison of the component functions of $\hat S$ with the true principle $S^\star$.}
  \label{fig:synthetic_recover}
\end{figure}

First, we evaluate how $\hat{S}$ closely approximates the true principle $S^\star$.
The function recovery metric, defined as the $L^2$ distance between $\hat{S}$ and $S^\star$ over a dense grid in $[0,1]^4$, is 0.013. This implies that the approximation of $S^\star$ is quite accurate.
This is also shown clearly in Figure \ref{fig:synthetic_recover}, which provides the visual comparison of the recovered principle, $\hat{\phi}_1$ and $\hat{\psi}_{12}$ with $x_2=0.25$, with the true principle.
From the figure, we can see that the learned $\hat S$ closely follows the true principle $S^\star$.
More visual comparisons of the recovered principle are provided in Appendix \ref{sec:additional_synthetic_exp}.

\begin{table}[h]
\vspace{-4pt}
\caption{Transfer performance with varying $N$, where $\hat S$ is trained with $N=8$.}
\label{table:transfer_results}
\begin{center}
\begin{small}
\begin{sc}
\begin{tabular}{cccc}
\toprule
$N$ & \makecell{Ranking\\consistency} & Avg.\ reward & \makecell{Reward gap\\to oracle} \\
\midrule
4  & 0.9940 & 2.5262 & $7.9\times 10^{-5}$ \\
8  & 0.9947 & 2.7783 & $1.76\times 10^{-4}$ \\
16 & 0.9942 & 2.9377 & $2.37\times 10^{-4}$ \\
32 & 0.9941 & 3.0558 & $6.15\times 10^{-4}$ \\
\bottomrule
\end{tabular}
\end{sc}
\end{small}
\end{center}
\vspace{-1em}
\end{table}

We provide the ranking consistency and scheduling performance while varying the number of candidate items, $N$, in Table \ref{table:transfer_results}.
Ranking consistency is assessed by the proportion of pairs $(i,j)$ for which $\mathrm{sign}(S^\star(\mathbf{x}_i)-S^\star(\mathbf{x}_j))=\mathrm{sign}(\hat{S}(\mathbf{x}_i)-\hat{S}(\mathbf{x}_j))$.
Across all settings of $N$, the learned policy achieves very close performance to the oracle, with average reward gaps on the order of $10^{-4}$. This indicates that the FSP framework is capable of approximating the optimal policy with very high fidelity. Moreover, the item ranking accuracy remains consistently around $99.4\%$, confirming that the learned value functions preserve the relative ordering of states almost perfectly. As $N$ increases, the average reward increases as expected due to the item diversity, yet the discrepancy between the learned and oracle policies remains negligible. This result demonstrates that the factorized representation generalizes robustly under transfer and scales reliably with larger environments.

This synthetic experiment directly verifies the identifiability and learnability of the global scheduling principle. The successful recovery of $S^\star$ shown in the results provides evidence that the FSP framework can discover interpretable and transferable principles that guide scheduling.

\subsection{Realistic Scheduling Experiment}
\label{subsec:realistic_experiment}

\begin{table*}[t]
\caption{In-distribution reward, $10^{\mathrm{ID}}$, and out-of-distribution average rewards under varying $N$ and system populations. FSP and Whittle
are learned on the $10^{\mathrm{ID}}$ system and transferred without retraining; the other baselines are trained for each $N$.}
\label{table:transfer_results_realistic}
\setlength{\tabcolsep}{2pt}
\begin{center}
\begin{sc}
\resizebox{\textwidth}{!}{%
\begin{tabular}{llcccccccc}
\toprule
Task & $N$ & FSP & Whittle & NAM & DQN & A2C & PPO & TRPO & QR-DQN \\
\midrule
\multirow{5}{*}{Wireless}
& $10^{\mathrm{ID}}$ & 0.109 & 0.114 & 0.129 & 0.112 & 0.094 & 0.107 & 0.112 & 0.127 \\
\cmidrule(lr){2-10}
& 5  & 0.085$\pm$0.001 & 0.096$\pm$0.000 & 0.131$\pm$0.000 & -0.073$\pm$0.002 & 0.127$\pm$0.000 & 0.129$\pm$0.000 & 0.130$\pm$0.000 & 0.126$\pm$0.000 \\
& 10 & 0.103$\pm$0.001 & 0.115$\pm$0.001 & 0.109$\pm$0.001 & -1.313$\pm$0.003 & 0.095$\pm$0.001 & 0.104$\pm$0.001 & 0.112$\pm$0.001 & 0.119$\pm$0.001 \\
& 15 & -0.285$\pm$0.002 & -0.337$\pm$0.002 & -0.251$\pm$0.001 & -1.965$\pm$0.002 & -0.426$\pm$0.002 & -0.372$\pm$0.002 & -0.368$\pm$0.002 & -0.386$\pm$0.002 \\
& 20 & -0.781$\pm$0.001 & -0.888$\pm$0.001 & -0.752$\pm$0.001 & -2.609$\pm$0.001 & -1.045$\pm$0.002 & -0.971$\pm$0.002 & -0.951$\pm$0.002 & -1.019$\pm$0.002 \\
\midrule
\multirow{5}{*}{Inventory}
& $10^{\mathrm{ID}}$ & -0.353 & -0.344 & -0.361 & -0.355 & -0.479 & -0.566 & -0.566 & -0.487 \\
\cmidrule(lr){2-10}
& 5  & -0.206$\pm$0.011 & -0.231$\pm$0.010 & -0.232$\pm$0.014 & -0.228$\pm$0.012 & -0.157$\pm$0.017 & -0.137$\pm$0.016 & -0.142$\pm$0.018 & -0.127$\pm$0.015 \\
& 10 & -0.412$\pm$0.008 & -0.405$\pm$0.009 & -0.456$\pm$0.018 & -0.429$\pm$0.008 & -0.520$\pm$0.018 & -0.554$\pm$0.021 & -0.575$\pm$0.019 & -0.503$\pm$0.013 \\
& 15 & -0.712$\pm$0.008 & -0.723$\pm$0.008 & -0.763$\pm$0.017 & -0.727$\pm$0.008 & -0.701$\pm$0.023 & -0.700$\pm$0.020 & -0.679$\pm$0.017 & -0.643$\pm$0.012 \\
& 20 & -0.726$\pm$0.008 & -0.921$\pm$0.011 & -0.849$\pm$0.022 & -0.742$\pm$0.008 & -1.047$\pm$0.027 & -1.052$\pm$0.027 & -1.022$\pm$0.024 & -0.974$\pm$0.015 \\
\midrule
\multirow{5}{*}{Warehouse}
& $10^{\mathrm{ID}}$ & 0.823 & 0.737 & 0.446 & 0.883 & 0.824 & 0.866 & 0.826 & 0.884 \\
\cmidrule(lr){2-10}
& 5  & 0.630$\pm$0.021 & 0.596$\pm$0.021 & -9.272$\pm$0.920 & -10.913$\pm$1.374 & -1.677$\pm$0.714 & -9.587$\pm$1.041 & -4.789$\pm$1.086 & -0.610$\pm$0.289 \\
& 10 & 0.738$\pm$0.019 & 0.677$\pm$0.019 & -18.031$\pm$1.269 & -35.238$\pm$1.931 & 0.682$\pm$0.021 & -7.817$\pm$1.470 & 0.696$\pm$0.021 & -1.974$\pm$0.457 \\
& 15 & 0.712$\pm$0.014 & 0.621$\pm$0.014 & -25.995$\pm$2.208 & -16.747$\pm$2.807 & -68.798$\pm$0.030 & -0.931$\pm$0.375 & 0.601$\pm$0.015 & 0.637$\pm$0.015 \\
& 20 & 0.649$\pm$0.023 & 0.605$\pm$0.015 & -30.154$\pm$2.545 & -9.063$\pm$1.913 & -92.641$\pm$0.035 & -83.264$\pm$0.118 & 0.545$\pm$0.015 & -63.028$\pm$0.412 \\
\bottomrule
\end{tabular}}
\end{sc}
\end{center}
\vspace{-2em}
\end{table*}

To validate the interpretability and robustness of the learned scheduling principles, we evaluate our framework on three realistic item-selection tasks with distinct structural characteristics: wireless user scheduling, inventory replenishment, and warehouse clearance. For each task, we use a four-dimensional feature vector $\mathbf{x}_n\in[0,1]^4$, which includes an intentionally uninformative random feature. This allows us to examine whether the learned principle emphasizes task-relevant conditions while suppressing spurious dimensions. The three tasks share a common single-item selection form, but differ in their coupling structure and penalty sensitivity. The detailed specifications for the realistic scheduling tasks are provided in Appendix~\ref{sec:app:realistic_scheduling_details}.

The selection of these tasks is motivated by the need to examine the transferability of scheduling policies. The wireless user scheduling task represents a system in which item selection is strongly coupled to item configurations. This is because the agent's decision for one user significantly affects others through shared penalties. In contrast, the inventory replenishment task provides a quasi-additive environment, where the system dynamics are relatively more localized. This approximate additivity can make item priorities more stable across feature-distribution changes for a fixed system size $N$. The warehouse clearance task complements these cases by introducing shared clearance capacity and severe overflow penalties, so poor rankings can cause large negative rewards. By comparing these tasks, we evaluate whether FSP can retain transferability across different values of $N$, even in highly coupled or penalty-sensitive settings, while achieving comparable levels of generalization within $N$ in more localized environments, thereby testing whether a reusable priority rule transfers across changes in system size and feature distribution.

We first evaluate the performance and transferability of the FSP framework on each realistic task. As a baseline, we consider the Whittle-index \cite{whittle1988restless}, deep Q-network (DQN) \cite{mnih2015human}, advantage actor-critic (A2C) \cite{mnih2016asynchronous}, proximal policy optimization (PPO) \cite{schulman2017proximal}, trust region policy optimization (TRPO) \cite{schulman2015trust}, quantile regression DQN (QR-DQN) \cite{dabney2018distributional}, and neural additive model (NAM) \cite{agarwal2021neural} trained on the same environments.
Following Table~\ref{table:rw_summary}, we use the Whittle-index as an index-policy baseline for transfer comparison.
DQN, A2C, PPO, TRPO, QR-DQN, and NAM are included as fixed-size learning baselines, highlighting the structural limitations of DRL policies or additive models whose representations are tied to a fixed candidate set size.

For each task, we train the FSP policy and estimate the Whittle-index only once on a training system with $N=10$.  
Then, to demonstrate transferability, we apply FSP and Whittle-index policies to environments with $N\in\{5,10,15,20\}$ and under different feature
distributions, without any retraining, in a zero-shot manner. 
In contrast, other learning-based policies are trained individually for 
each system size $N$ due to their lack of transferability.  

In Table~\ref{table:transfer_results_realistic}, 
we provide both the in-distribution (ID) result on the training system and the 
out-of-distribution (OOD) transfer results across varying $N$ and system populations 
(i.e., variations in the distribution of item features). 
The OOD transfer results are averaged over 100 instances and reported with a 95\% 
confidence interval. The comparison of the ID performance, $10^{\mathrm{ID}}$, validates the scheduling performance of the FSP policy.
Across all three tasks, the FSP policy achieves competitive performance relative to baseline policies.
These results suggest that, even in realistic environments without
a guaranteed global scheduling principle, a reusable and
interpretable principle can be learned that generalizes across
system conditions.

The OOD performance across $N\in\{5,10,15,20\}$ evaluates different generalization settings for different methods. 
For FSP and the Whittle-index policy, the OOD results measure zero-shot transfer across both candidate set sizes and system populations. 
For the fixed-size learning baselines, the results measure generalization to different system populations at a fixed $N$.

For the wireless user scheduling task, the strongly coupled item configurations make transfer more difficult as the candidate set size increases.
Despite relying on a single scheduling principle learned on the $10^{\mathrm{ID}}$ system, the FSP policy becomes increasingly competitive at larger values of $N$.
The Whittle-index policy performs better than the FSP policy at $N=5$ and $N=10$, but the FSP policy overtakes it at $N=15$ and $N=20$. 
This suggests that a purely item-wise index approximation becomes less effective when system-level coupling effects become stronger.
Among the fixed-size learning baselines, the NAM policy is one of the strongest methods and achieves the best reward in most OOD settings, suggesting that an additive value representation can be effective in this task.
However, this result does not indicate zero-shot transfer across system sizes, because the NAM policy is trained separately for each value of $N$.
Several other separately trained baselines are also stronger than FSP in smaller systems.
In contrast, several fixed-size DRL policies, including DQN, A2C, PPO, TRPO, and QR-DQN, deteriorate substantially from $N=15$ onward despite being trained separately for each system size.
This indicates that strong per-size training does not necessarily yield robust population generalization, nor does it imply a reusable scheduling principle across larger systems.


For the inventory replenishment task, the localized and quasi-additive dynamics make per-size learning baselines competitive in selected regimes.
Several DRL policies other than the DQN policy outperform the FSP policy at $N=5$ and $N=15$, showing that fixed-size learning baselines can exploit the relatively weak coupling when they are trained separately for each system size.
However, this advantage is not consistent across $N$, as the FSP policy overtakes these DRL policies at $N=10$ and becomes the best-performing method at $N=20$ despite being trained only on the $10^{\mathrm{ID}}$ system.
The Whittle-index policy only slightly outperforms the FSP policy at the training system size $N=10$ and falls behind at the other tested system sizes.
Thus, although per-size training can be competitive in this quasi-additive environment, the FSP policy exhibits more stable transfer as a single scheduling principle and becomes particularly strong at the largest tested system size.

For the warehouse clearance task, distribution shifts can cause costly scheduling ranking errors, which may result in severe overflow penalties.
Many fixed-size learning policies therefore incur large negative rewards under OOD evaluation. The NAM and DQN policies fail across all tested system sizes, while A2C, PPO, and QR-DQN also deteriorate at several values of $N$.
In contrast, the FSP policy achieves the highest reward across all tested system sizes despite being trained only on the $10^{\mathrm{ID}}$ system.
The Whittle-index policy is more stable than most fixed-size learning policies, but it remains below the FSP policy for every tested system size.
This indicates that the FSP policy is the most robust method in this task, maintaining stable zero-shot transfer under both system-size and population shifts.

These OOD results over different tasks show that FSP does not merely fit the training system, but learns a reusable scheduling principle that remains robust under changes in system size and heterogeneous item populations.

\begin{figure}[h]
  \centering
  \begin{subfigure}{0.48\linewidth}
    \centering
    \includegraphics[width=\linewidth]{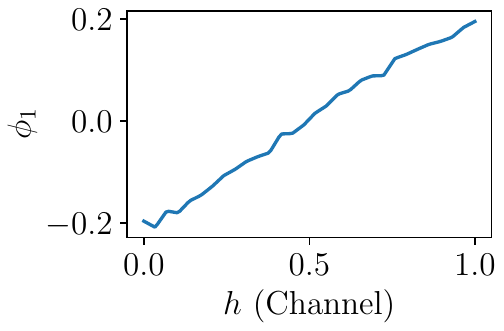}
    \subcaption{$\phi_1$}
\label{fig:wireless_phi_maintext_a}
  \end{subfigure}\hfill
  \begin{subfigure}{0.48\linewidth}
    \centering
    \includegraphics[width=\linewidth]{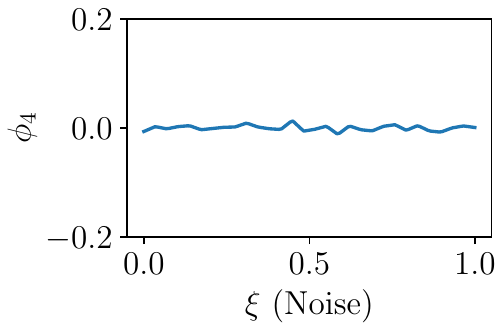}
    \subcaption{$\phi_4$}
\label{fig:wireless_phi_maintext_b}
  \end{subfigure}

  \caption{Two of four one-dimensional component functions of the FSP policy in the wireless user scheduling task.}
  \label{fig:wireless_phi_maintext}
\vspace{-1em}
\end{figure}

To show the interpretability of the FSP framework, Figure \ref{fig:wireless_phi_maintext} illustrates the selected additive components from the learned principle for the wireless user scheduling task.
%
In Figure \ref{fig:wireless_phi_maintext_a}, the scoring function $\phi_1(h)$ associated with the channel condition is monotonically increasing, indicating that users with better channel quality are consistently assigned higher priority.
This aligns with domain intuition, as improved channel conditions directly lead to higher achievable rewards.
In contrast, the scoring function $\phi_4(\xi)$ corresponding to the intentionally uninformative noise feature remains nearly flat, confirming that the learned scheduling principle correctly assigns negligible importance to irrelevant information.
%
These results demonstrate that the FSP framework learns interpretable prioritization rules that reflect meaningful domain factors, while suppressing spurious correlations.
This transparency provides insight into the learned decision logic and helps explain why the same priority rule can be reused across system instances.

Additional results and more detailed analyses of the realistic experiments, including reward performance as well as additive one-dimensional and two-dimensional pairwise components, are provided in Appendix~\ref{appendix:additional_results_realistic}.

\section{Discussions}

\paragraph{Towards more practical scheduling.}{
Practical systems often involve additional system factors, sub-decisions beyond item selection, and multi-item decisions.
Such factors can be incorporated through additional additive system terms or item-system interaction components without sacrificing interpretability. Moreover, the principle can be extended to handle sub-decisions and multi-item selection by introducing structured residual components and considering the learned scores as reusable primitives for combinatorial settings. These observations suggest that the proposed framework naturally extends to more realistic and complex scheduling scenarios.
}
%
\vspace{-1.5pt}
\paragraph{Scalability.}

Although FSP is designed to be reusable across different candidate set sizes, scalability remains a limitation in high-dimensional feature spaces. In particular, the number of possible pairwise interaction components grows quadratically with the number of features, so parameterizing all interaction components may increase computational cost and reduce interpretability. While our experiments demonstrate strong scalability with respect to the number of candidate items $N$, this issue may become more important in high-dimensional scheduling systems. Therefore, applying FSP to larger feature spaces may require sparse interaction selection, low-rank interaction structures, or domain-guided restrictions on admissible interactions.

\vspace{-1.5pt}
\paragraph{Assumption on the existence of a global principle.}{
Our theoretical analysis assumes the existence of a global scheduling principle $S^\star$ that determines rewards as a function of feature conditions.
This assumption enables identifiability and regret guarantees, but may be restrictive in settings where no exact additive structure exists.
In such cases, the proposed framework should be viewed as learning an approximate yet interpretable surrogate principle.
Nevertheless, realistic scheduling experiments reveal that the learned principle remains informative and robust even when the system does not strictly follow an additive structure.}

\section{Conclusion}
In this paper, we proposed the FSP framework, which learns an interpretable scheduling priority rule directly from experience. It preserves a simple, priority-based factorized structure while achieving robust performance and transferability across different system scales and feature distributions.
%
Through experiments on both synthetic and realistic tasks, we demonstrated that the framework can learn reusable and interpretable scheduling principles that generalize beyond the training system, even when an exact global principle is not guaranteed.
These results show that the proposed framework provides a practical approach for learning interpretable and transferable scheduling rules.

%
%
%
%
\section*{Acknowledgements}
This work was supported in part by the National Research Foundation of Korea (NRF) grant funded by the Korea government (MSIT) (RS-2025-24523498 and RS-2026-25482257) and in part by the IITP(Institute of Information \& Communications Technology Planning \& Evaluation)-ITRC(Information Technology Research Center) grant funded by the Korea government (Ministry of Science and ICT) (IITP-2026-RS-2021-II211816).
We thank all reviewers for their valuable comments and suggestions.

%

\section*{Impact Statement}

This paper presents work whose goal is to advance the field of machine learning, with a focus on learning interpretable and transferable scheduling principles. The proposed framework is intended to improve transparency and robustness of decision-making systems, and does not introduce new ethical risks beyond those typically associated with learning-based optimization methods. We do not anticipate any negative societal impacts from this work in the near future.

\bibliography{mybib}
\bibliographystyle{icml2026}

\newpage
\appendix
\onecolumn

\section{Algorithmic Details}

This appendix provides the learning and inference algorithms for the global scheduling principle $S(x)$. The learning algorithm updates the parameters of the structured additive model $\{\phi_k,\psi_{kl}\}$, $\Theta$, using both likelihood-based and value-regression losses, while enforcing regularization and identifiability constraints. The inference algorithm shows how the learned principle is used at deployment to score and select items.

\begin{algorithm}[H]
\caption{Learning the Global Scheduling Principle $S(x)$}
\label{alg:principle-learning}
\begin{algorithmic}[1]
\REQUIRE Feature space $[0,1]^K$, temperature $\tau>0$, discount factor $\gamma\in[0,1]$, trade-off hyperparameter $\lambda\ge 0$, learning rate $\eta$
\STATE \textbf{Model:} $S(\mathbf{x};\Theta)=\sum_{k=1}^K \phi_k(x_k) + \sum_{(k,l)\in \mathcal{P}} \psi_{kl}(x_k,x_l)$
\STATE \textbf{Identifiability:} Impose the constraints in Eq.~\eqref{eqn:mean_centering}--\eqref{eqn:global_centering}.
\STATE Initialize parameters $\Theta \leftarrow \{\phi_k\}_{k=1}^K \cup \{\psi_{kl}\}_{(k,l)\in\mathcal{P}}$
\FOR{$t=1,2,\cdots$}
  \STATE Observe candidate set at time step $t$, $\mathcal{N}(t)=\{\mathbf{x}^t_n\}_{n=1}^{N^t}$
  \STATE Compute item scores $s_n \gets S(\mathbf{x}^t_n;\Theta)$ for all $n\in\mathcal{N}(t)$
  \STATE Obtain the softmax policy $\pi(n\mid \mathcal{N}(t)) \gets \frac{\exp(s_n/\tau)}{\sum_{m=1}^{N^t}\exp(s_m/\tau)}$
  \STATE Sample action $n^t \sim \pi(\cdot\mid \mathcal{N}(t))$, execute, observe reward $r^t$ and next candidate set $\mathcal{N}(t{+}1)$

  \STATE Calculate likelihood loss as $\mathcal{L}_{\text{lik}} \gets -\, r^t \log \pi(n^t\mid \mathcal{N}(t))$

  \STATE Calculate state values as
  \[
  V(\bar{s}^t) \gets \frac{1}{N^t}\sum_{n=1}^{N^t} S(\mathbf{x}^t_n;\Theta)\textrm{ and }
  V(\bar{s}^{t+1}) \gets \frac{1}{N^{t+1}}\sum_{n=1}^{N^{t+1}} S(\mathbf{x}^{t+1}_n;\Theta)
  \]

  \STATE Calculate value loss as $\mathcal{L}_{\text{val}} \gets (y^t - V(\bar{s}^t))^2$, where the TD target is given by $y^t \gets r^t + \gamma V(\bar{s}^{t+1})$

  \STATE Calculate regularization loss as  $\mathcal{R} \gets \Omega(\phi,\psi)$ (e.g., $l_2$ penalty)
  \STATE Obtain total loss as $\mathcal{L} \gets \mathcal{L}_{\text{lik}} + \lambda\,\mathcal{L}_{\text{val}} + \mathcal{R}$
  \STATE Update parameters using the total loss as $\Theta \gets \Theta - \eta \nabla_{\Theta}\mathcal{L}$

  \STATE Project the parameters $\Theta$ to the identifiability set
\ENDFOR
\end{algorithmic}
\end{algorithm}

\begin{algorithm}[H]
\caption{Inference / Deployment (Scoring and Scheduling)}
\label{alg:inference}
\begin{algorithmic}[1]
\REQUIRE Learned $S(\mathbf{x};\hat{\Theta})=\sum_k \phi_k(x_k)+\sum_{(k,l)\in\mathcal{P}}\psi_{kl}(x_k,x_l)$
\STATE Observe candidate set $\mathcal{N}(t)=\{\mathbf{x}^t_n\}_{n=1}^{N^t}$
\STATE Compute scores $s_n \gets S(\mathbf{x}^t_n;\hat{\Theta})$ for all $n\in\mathcal{N}^t$
\STATE Select item $n^t \gets \arg\max_{n\in\mathcal{N}^t} s_n$
\STATE Execute action $n^t$
\end{algorithmic}
\end{algorithm}

\section{Assumptions and Proofs of Theoretical Results}
\label{appendix:theory}
\subsection{Assumptions}

We summarize the standing assumptions used in the theoretical results. We only mention assumptions in proofs only if they are significant.
\begin{itemize}
\item[(A1)] \textbf{Existence of global principle.} There exists a (possibly unknown)
$S^\star:[0,1]^K\!\to\!\mathbb{R}$ that determines the expected reward of
selecting an item with feature $x$, i.e., $\mathbb{E}[r^t \mid \mathbf{x}_n^t=x]
= g(S^\star(x))$ for a nondecreasing $g$, which is $L_g$-Lipschitz and bounded in $[0,1]$.
\item[(A2)] \textbf{Single-item selection, weak interaction} At each step one item is selected. The (immediate) reward depends on the selected item's feature through $S^\star$; unchosen items affect reward only via constraints/normalization (no direct multi-item payoff).
\item[(A3)] \textbf{Sampling for learning/generalization} For approximation and generalization results, we assume i.i.d.\ sampling of feature points from a stationary distribution $P$ on $[0,1]^K$. 
\item[(A4)] \textbf{Regularity} $S^\star$ is $L$-Lipschitz and its univariate (bivariate) components admit bounded second mixed derivatives; our bases are given by splines/lattices and sufficiently dense.
\end{itemize}

\subsection{Proof of Theorem~\ref{thm:identifiability} (Identifiability)}
\begin{proof}
Suppose $S$ has two admissible decompositions $(\{\phi_k\},\{\psi_{kl}\})$ and $(\{\tilde\phi_k\},\{\tilde\psi_{kl}\})$ satisfying the centering constraints. 
Let
\[
\Delta(\mathbf{x})=\sum_{k=1}^K \big(\phi_k(x_k)-\tilde\phi_k(x_k)\big)+
\sum_{(k,l)\in\mathcal{P}} \big(\psi_{kl}(x_k,x_l)-\tilde\psi_{kl}(x_k,x_l)\big),
\]
so that $\Delta(\mathbf{x})\equiv 0$.
We define $\delta\phi_k:=\phi_k-\tilde\phi_k$ and $\delta\psi_{kl}:=\psi_{kl}-\tilde\psi_{kl}$.
For a function $f$ on $[0,1]^K$ and an index set $I\subseteq[K]=\{1,\dots,K\}$, define the marginal projection on the coordinates $I$ as
\[
(\Pi_I f)(\mathbf{x}_I) \coloneqq \int_{[0,1]^{K-|I|}} f(\mathbf{x})\,d\mathbf{x}_{[K]\setminus I},
\]
where $\mathbf{x}_I=\{x_i: i\in I\}$ and $d\mathbf{x}_{[K]\setminus I}$ denotes integration with respect to all coordinates except those in $I$.
First, we apply the projection operator, $\Pi_{\{k\}}$, to $\Delta$:
\[
0 = (\Pi_{\{k\}} \Delta)(x_k)
= \delta\phi_k(x_k) + \sum_{j\neq k}\int_0^1 \delta\phi_j(x_j)\,dx_j
+ \sum_{(i,j)\in\mathcal P} (\Pi_{\{k\}} \delta\psi_{ij})(x_k).
\]
The second term vanishes by the mean centering constraint in Equation \eqref{eqn:mean_centering}.
For each $(i,j)\in\mathcal P$, $(\Pi_{\{k\}}\delta\psi_{ij})$ vanishes by the row/column and global centering constraints in Equations \eqref{eqn:row_centering}--\eqref{eqn:global_centering}.
If $i=k$ or $j=k$, the row/column centering constraints yield $0$; if $\{i,j\}\cap\{k\}=\emptyset$, both $x_i,x_j$ are integrated out and the global centering constraint yields $0$.
Repeating this for all $k$ yields that, for all $k$, $\phi_k(x_k)=\tilde\phi_k(x_k)$ a.e.
Then, $\Delta$ becomes:
\[
\Delta(\mathbf{x})=\sum_{(k,l)\in\mathcal{P}}\delta\psi_{kl}(x_k,x_l).
\]
We fix $(k,l)\in\mathcal{P}$ and apply $\Pi_{\{k,l\}}$ to $\Delta$:
\[
0 = (\Pi_{\{k,l\}} \Delta)(x_k,x_l) = \delta\psi_{kl}(x_k,x_l) + \sum_{(i,j)\in\mathcal{P}\setminus\{(k,l)\}}(\Pi_{\{k,l\}}\delta\psi_{ij})(x_k,x_l).
\]
For $(i,j)\neq(k,l)$, $(\Pi_{\{k,l\}}\delta\psi_{ij})$ vanishes by:
if exactly one of $\{i,j\}$ lies in $\{k,l\}$, the remaining coordinate is integrated out and the row/column centering constraints yield $0$;
if $\{i,j\}\cap\{k,l\}=\emptyset$, both $x_i,x_j$ are integrated out and the global centering constraint yields $0$.
Therefore, $\delta\psi_{kl}\equiv 0$ a.e. for all $(k,l)\in\mathcal P$.
Consequently, $\phi_k=\tilde\phi_k$ and $\psi_{kl}=\tilde\psi_{kl}$ a.e., proving uniqueness.
\end{proof}

\subsection{Proof of Theorem~\ref{thm:approximation} (Approximation Error)}

With Assumption A4, we can show the theorem by directly following standard spline approximation theory \cite{schumaker2007spline}.
For univariate components, cubic spline interpolation achieves $\|f-f_h\|_\infty = O(h^2)$ for $C^2$ smooth functions, where $h=1/R$. 
For bivariate components, tensor-product splines achieve the same $O(h^2)$ 
rate under $C^{2,2}$ smoothness. 
Combining these bounds across additive components yields the stated result.

\subsection{Proof of Theorem~\ref{thm:ranking} (Ranking Consistency \& Greedy regret)}
\begin{proof}
By the assumptions, $|S^\star(\mathbf{x}) - \hat S(\mathbf{x})| \leq \epsilon$ for all $\mathbf{x}$.
Then, for any items $\mathbf{x}_i,\mathbf{x}_j$, we have
\[
\hat S(\mathbf{x}_i)-\hat S(\mathbf{x}_j) \ge S^\star(\mathbf{x}_i)-\epsilon - (S^\star(\mathbf{x}_j)+\epsilon)=(S^\star(\mathbf{x}_i)-S^\star(\mathbf{x}_j)) - 2\epsilon.
\]
Therefore, if $S^\star(\mathbf{x}_i)-S^\star(\mathbf{x}_j) > 2\epsilon$, then $\hat S(\mathbf{x}_i)>\hat S(\mathbf{x}_j)$, which proves the consistency of the ranking.
Additionally, we have
\[
S^\star(\hat{\mathbf{x}})\ge \hat S(\hat{\mathbf{x}})-\epsilon 
\ge \hat S(\mathbf{x}^\star)-\epsilon 
\ge S^\star(\mathbf{x}^\star)-2\epsilon,
\]
where the second inequality comes from the definition of $\hat{\mathbf{x}}$.
Thus, the regret $S^\star(\mathbf{x}^\star)-S^\star(\hat{\mathbf{x}})$ is at most $2\epsilon$.
\end{proof}

\subsection{Proof of Theorem~\ref{thm:transfer} (Principle invariance)}
\begin{proof}
Define the hypothesis class 
$\mathcal{S}=\{S:\mathcal{X}\to\mathbb{R}\mid S \text{ additive with centering}\}$.
Let the population risk be
$
\mathcal{R}_P(S)=\mathbb{E}_{(X,Y)\sim P}\big[\ell(S(X),Y)\big].
$
For each $\mathbf{x}\in\mathcal{X}$ and scalar $z\in\mathbb{R}$, define the conditional risk
\[
L_\mathbf{x}(z) \coloneqq \mathbb{E}[\ell(z,Y)| X=\mathbf{x}].
\]
By strict convexity of $\ell(\cdot,y)$, $L_\mathbf{x}(z)$ admits a unique minimizer $z^\star(x)$. 
For any marginal $P(X)$, the population risk decomposes as
\[
\mathcal{R}_P(S)=\mathbb{E}_{X\sim P}\big[L_X(S(X))\big],
\]
which is minimized pointwise at $S^\star(\mathbf{x})=z^\star(\mathbf{x})$ for each $\mathbf{x}$, independently of $P(X)$.
Therefore, if two systems $A$ and $B$ share the same conditional law $Y| X$, they share the same minimizer $S^\star$, i.e., the optimal scheduling principle is invariant across systems $A$ and $B$.
Identifiability constraints ensure uniqueness of the decomposition into $\{\phi_k,\psi_{kl}\}$.
\end{proof}

\section{Additional Results for Synthetic Experiments}
\label{sec:additional_synthetic_exp}
This appendix provides additional visualizations that support the synthetic experiment reported in Section \ref{subsec:synthetic_experiment}.
The figures illustrate how accurately the proposed FSP framework recovers the underlying global scheduling principle by comparing the learned components with the ground-truth ones.

\begin{figure}[h]
\centering
\includegraphics[width=0.235\linewidth]{figs/synthetic_legend.pdf} \\
\begin{subfigure}{0.235\linewidth}
\centering
\includegraphics[width=\linewidth]{figs/synthetic_phi1.pdf}
\subcaption{$\phi_1$}
\end{subfigure}\hfill
\begin{subfigure}{0.235\linewidth}
\centering
\includegraphics[width=\linewidth]{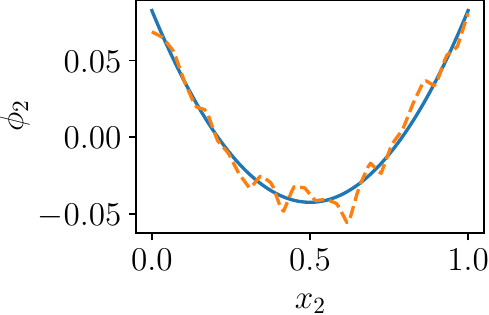}
\subcaption{$\phi_1$}
\end{subfigure}\hfill
\begin{subfigure}{0.235\linewidth}
\centering
\includegraphics[width=\linewidth]{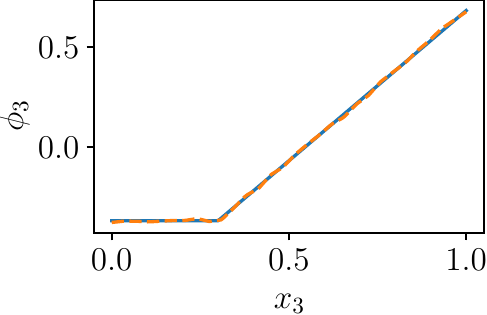}
\subcaption{$\phi_1$}
\end{subfigure}\hfill
\begin{subfigure}{0.235\linewidth}
\centering
\includegraphics[width=\linewidth]{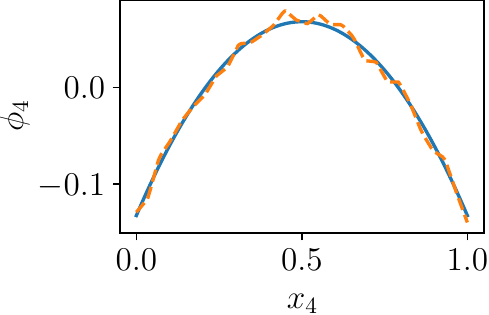}
\subcaption{$\phi_1$}
\end{subfigure}
\caption{The visual comparison of the univariate component functions of $\hat S$ with the true principle $S^\star$.}
\label{fig:synthetic_phi}
\end{figure}

Figure \ref{fig:synthetic_phi} visualizes the learned univariate component functions $\hat{\phi}_k$ together with the corresponding ground-truth components $\phi_k^\star$ for all feature dimensions.
The close overlap between the estimated and true curves across the entire feature domain indicates that the proposed method accurately recovers the marginal effects of individual features, validating the effectiveness of the factorized representation and the imposed identifiability constraints.

\begin{figure}[h]
\centering
\includegraphics[width=0.235\linewidth]{figs/synthetic_legend.pdf} \\
\begin{subfigure}{0.235\linewidth}
\centering
\includegraphics[width=\linewidth]{figs/synthetic_psi12_x2_0p25.pdf}
\subcaption{$\psi_{12}$ with $x_2=0.25$}
\end{subfigure}\hspace{1em}
\begin{subfigure}{0.235\linewidth}
\centering
\includegraphics[width=\linewidth]{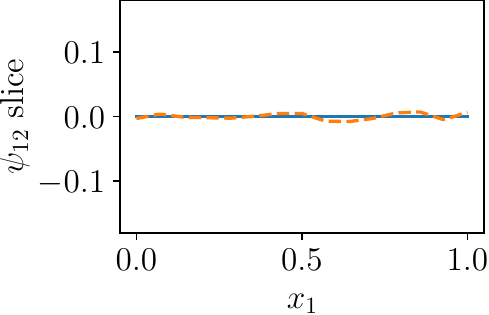}
\subcaption{$\psi_{12}$ with $x_2=0.50$}
\end{subfigure}\hspace{1em}
\begin{subfigure}{0.235\linewidth}
\centering
\includegraphics[width=\linewidth]{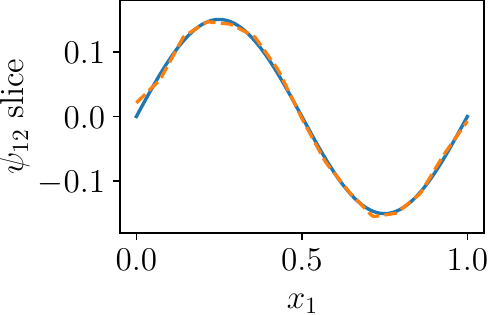}
\subcaption{$\psi_{12}$ with $x_2=0.75$}
\end{subfigure}
\caption{The one-dimensional visual comparison of the pairwise component function of $\hat S$ with the true principle $S^\star$.}
\label{fig:synthetic_1d_psi}
\end{figure}

Figure \ref{fig:synthetic_1d_psi} presents one-dimensional slices of the learned pairwise interaction component function $\hat{\psi}_{12}$ at several fixed values of the second feature.
These slices are compared against the corresponding ground-truth interactions and demonstrate that the learned interaction function closely matches the true structure across different conditioning values, capturing the non-additive dependence between features.

\begin{figure}[h]
\centering
\begin{subfigure}{0.235\linewidth}
\centering
\includegraphics[width=\linewidth]{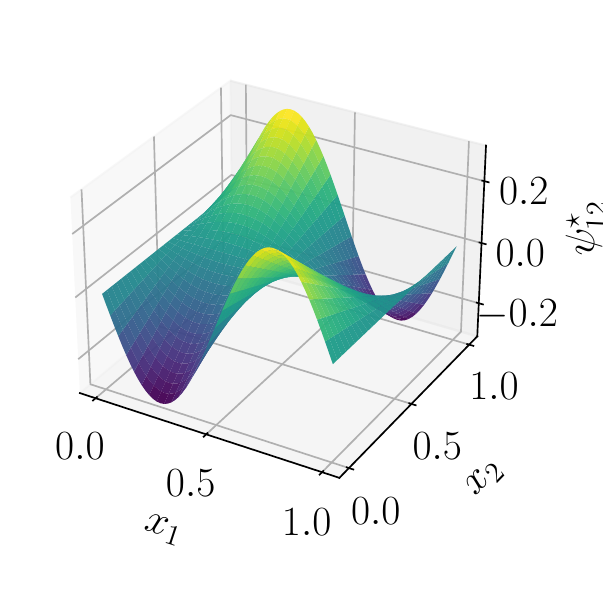}
\subcaption{$\psi_{12}^\star$}
\end{subfigure}\hspace{1em}
\begin{subfigure}{0.235\linewidth}
\centering
\includegraphics[width=\linewidth]{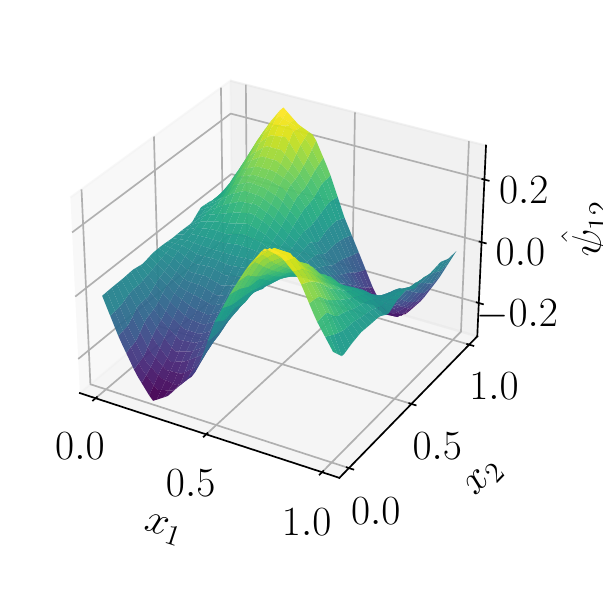}
\subcaption{$\hat{\psi}_{12}$}
\end{subfigure}\hspace{1em}
\begin{subfigure}{0.235\linewidth}
\centering
\includegraphics[width=\linewidth]{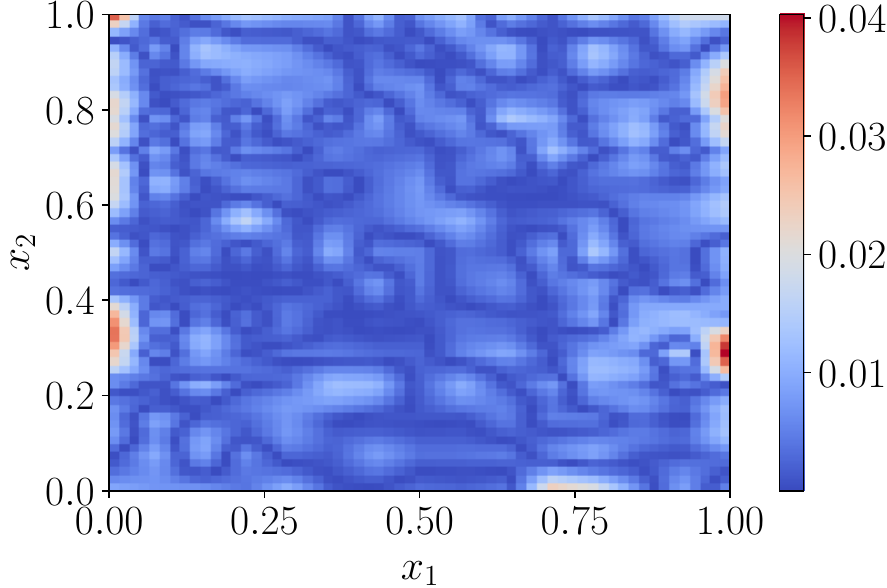}
\subcaption{$|\psi_{12}^\star-\hat{\psi}_{12}|$	}
\end{subfigure}
\caption{The two-dimensional visual comparison of the component functions of $\hat S$ with the true principle $S^\star$.}
\label{fig:synthetic_2d_psi}
\end{figure}

Figure \ref{fig:synthetic_2d_psi} shows a two-dimensional surface visualization of the learned pairwise interaction component $\hat{\psi}_{12}$ alongside the ground-truth surface $\psi_{12}^\star$, as well as their absolute difference.
The surface plots confirm that the learned interaction accurately reproduces the global shape and magnitude of the true interaction, while the error visualization indicates uniformly small discrepancies over the feature space.

Overall, these visualizations provide qualitative evidence that the proposed FSP framework can reliably recover both univariate and pairwise components of the global scheduling principle.
Together with the quantitative results in Section \ref{subsec:synthetic_experiment}, they demonstrate that the learned scheduling principles are interpretable and faithfully capture the underlying structure of the synthetic environment.

\section{Details of Realistic Scheduling Environments}
\label{sec:app:realistic_scheduling_details}

This appendix provides the detailed specifications for realistic scheduling tasks, including ones used in the realistic scheduling experiments in Section~\ref{subsec:realistic_experiment}.
These tasks are based on the standard scheduling formulation introduced in Section~\ref{sec:standard_scheduling_problem}, while incorporating stochastic features, constraints, and competing economic objectives.
Therefore, the scheduling principles of these tasks are not trivial.

\subsection{Wireless User Scheduling}
\label{sec:wireless_env}
At each time step $t$, the wireless user scheduling environment presents a candidate set of $N$ users,
where each user is characterized by a feature vector $\mathbf{x}_n^t \in [0,1]^4$:
\[
\mathbf{x}_n^t = \bigl(h_n^t,\; \tau_n^t,\; q_n^t,\; \xi_n^t \bigr),
\]
where $h_n^t$ represents the normalized channel quality (e.g., capacity),
$\tau_n^t$ denotes the age of information (AoI) representing the time elapsed since the last successful service,
$q_n^t$ is the normalized buffer occupancy (queue length),
and $\xi_n^t$ is an uninformative random noise feature.
At each step, the scheduler selects an action $a^t \in \{1,\dots,N\}$ to grant 
exclusive access to the wireless channel. The transmission capacity for the 
selected user is determined by a non-linear interaction between the channel quality $h^t_{a^t}$ (scaled by $\eta$) and the available data in the buffer $q^t_{a^t}$. The realized throughput $T^t$ is limited by the 
available data in the buffer:
\[
T^t = \min\bigl( \eta \cdot h^t_{a^t},\; q^t_{a^t} \bigr).
\]
The system dynamics evolve as follows. For the selected user $n = a^t$, the 
AoI is reset to zero, while for unselected users, it increases linearly:
\[
\tau_n^{t+1} = 
\begin{cases} 
0, & n = a^t, \\
\min\{ \tau_n^t + \Delta\tau,\; 1 \}, & n \neq a^t,
\end{cases}
\]
where $\Delta\tau$ is a fixed increment with noise. Buffer occupancy is updated by 
incorporating stochastic packet arrivals $\lambda_n^t$ and subtracting the 
transmitted data:
\[
q_n^{t+1} = \min\bigl( \max\{ q_n^t + \lambda_n^t - \mathbf{1}(n=a^t)T^t,\; 0 \},\; 1 \bigr),
\]
where $\mathbf{1}(\cdot)$ is an indicator function.
To simulate realistic bursty traffic, the arrival process $\lambda_n^t$ 
follows a Gamma distribution, $\lambda_n^t \sim \mathrm{Gamma}(k, \theta)$, 
truncated to preserve the normalized range. Any arrival exceeding the buffer 
capacity is recorded as an overflow.
The immediate reward $r^t$ aggregates throughput and system-wide penalties:
\[
r^t = \omega T^t - \lambda_{\mathrm{ov}} \sum_{n=1}^{N} \max\{ q_n^t + \bar{\lambda} - 1,\; 0 \} - \lambda_{\mathrm{delay}} \bar{\tau}_{t+1},
\]
where $\omega$ is the throughput weight, $\lambda_{\mathrm{ov}}$ is the 
overflow penalty coefficient, and $\lambda_{\mathrm{delay}}$ scales the 
delay penalty. The overflow penalty is a non-linear function 
based on a constant mean arrival rate $\bar{\lambda}$. The delay penalty is defined 
as the arithmetic mean of the AoI across all users after the reset of the 
scheduled user's delay:
\[
\bar{\tau}_{t+1} = \frac{1}{N} \sum_{n=1}^{N} \tau_n^t - \frac{1}{N} \tau_{a^t}^t.
\]

This environment presents a significant challenge to the scheduling agent due to the non-additive and non-linear nature of the penalties.
In particular, it is characterized by strong global coupling and pronounced non-linearity, arising from the throughput constraint $T^t = \min(\eta h^t_{a^t}, q^t_{a^t})$ and the system-wide AoI penalty.
These effects are further amplified by the high variance induced by bursty Gamma-distributed traffic, resulting in a value function that is highly sensitive to the joint system state, where the relative priority of a user can vary substantially with the states of others.
Consequently, this environment requires the agent to learn a prioritization principle that balances spectral efficiency with long-term system stability.
The dependence of the penalties on aggregate system states makes this task a representative testbed in which the optimal scheduling policy deviates markedly from simple additive assumptions.

\subsection{Inventory Replenishment}





At each time step $t$, the inventory replenishment environment presents a candidate set $\{\mathbf{x}_n^t\}_{n=1}^{N}$, where each item feature $\mathbf{x}_n^t \in [0,1]^4$ follows the same convention as in the main text.
Specifically, the feature vector is defined as
\[
\mathbf{x}_n^t = \bigl(I_n^t,\; \mu_n^t,\; m_n^t,\; \xi_n^t \bigr),
\]
where $I_n^t$ denotes the normalized inventory level, $\mu_n^t$ the demand intensity, $m_n^t$ the unit margin, and $\xi_n^t$ is an intentionally uninformative random feature.
The random feature $\xi_n^t$ is included to examine whether the learned scheduling principle assigns negligible marginal effects to irrelevant dimensions.
In a system, the demand intensity and unit margin of each item are based on fixed values, but time-varying with Gaussian noise of $\epsilon_{\mu,n}^t \sim \mathcal{N}(0,\sigma_\mu^2)$ and $\epsilon_{m,n}^t \sim \mathcal{N}(0,\sigma_m^2)$, respectively.
The agent selects an action $a^t \in \{1,\dots,N\}$ indicating
the item to prioritize for replenishment.
The environment enforces a single-item replenishment constraint.
Given the selected action $a^t$, replenishment quantities are defined as
\[
q_n^t =
\begin{cases}
Q, & n = a^t, \\
0.5\,Q, & n \neq a^t,
\end{cases}
\]
where $Q > 0$ is a fixed replenishment amount.
The post-replenishment inventory is updated as
\[
\tilde I_n^t = \min\{ I_n^t + q_n^t,\; 1 \},
\]
and any excess beyond capacity is recorded as overflow.
After replenishment, stochastic demand is realized independently for each item.
The demand for item $n$ is sampled according to
\[
D_n^t \sim \mathrm{Binomial}(K_d,\; \mu_n^t) / K_d,
\]
where $K_d$ controls the granularity of demand realization.
The sales and next-step inventory are computed as
\[
S_n^t = \min\{ \tilde I_n^t,\; D_n^t \}, \qquad
I_n^{t+1} = \tilde I_n^t - S_n^t,
\]
and the stock-out is defined as $\max\{ D_n^t - \tilde I_n^t,\; 0 \}$.
In particular, the unit margin $m_n$ affects the reward through the price term
$p_n = c_n + m_n$, so that items with higher margins yield larger revenue.
The immediate reward $r^t$ aggregates multiple economic components:

\[
r^t
= 
w_{\mathrm{rev}} \sum_{n=1}^{N} p_n S_n^t
- w_{\mathrm{ord}} \sum_{n=1}^{N} c_n q_n^t
- w_{\mathrm{hold}} \sum_{n=1}^{N} h_n I_n^{t+1}
- w_{\mathrm{so}} \sum_{n=1}^{N} s_n \max\{ D_n^t - \tilde I_n^t,\; 0 \}
- w_{\mathrm{ov}} \lambda \sum_{n=1}^{N} \max\{ I_n^t + q_n^t - 1,\; 0 \},
\]
where $p_n$, $c_n$, $h_n$, and $s_n$ denote the price, unit cost, holding cost,
and stock-out penalty of item $n$, respectively.
The scalar $\lambda$ controls the overflow penalty.
This reward structure induces partially conflicting incentives and does not
reduce to an additive function of the selected item's features.

This environment presents non-additive characteristics due to the unselected items influence the reward indirectly through shared replenishment constraints, inventory carry-over, and overflow penalties.
Nevertheless, it exhibits a quasi-additive structure with relatively localized dynamics because unselected items still receive partial replenishment ($0.5Q$) that serves as a buffer against state deterioration.
The inherent additive reward structure allows the scheduling agent to learn a more consistent prioritization logic based on individual item features rather than complex global interdependencies.
This predictable structure enables DQN to maintain a reasonable level of transferability within a fixed $N$, as the relative value of replenishing an item remains stable across different permutations.

\subsection{Warehouse Clearance}

In addition to the two realistic tasks in Section \ref{subsec:realistic_experiment}, we introduce a warehouse clearance environment. This environment demonstrates that a lack of transferability in learned prioritization rules can directly lead to severe performance degradation, manifested as frequent overflow events and large reward failures.

At each time step $t$, the warehouse clearance environment consists of $N$ items represented by feature vectors $\{\mathbf{x}_n^t\}_{n=1}^{N}$, where $\mathbf{x}_n^t \in [0,1]^4$ and
\[
\mathbf{x}_n^t
=
\bigl(I_n^t,\; \rho_n^t,\; m_n^t,\; \xi_n^t \bigr).
\]
Here, $I_n^t$ denotes the inventory level of item $n$, $\rho_n^t$ is the inventory inflow rate, $m_n^t$ is the unit margin, and $\xi_n^t$ is an intentionally uninformative random feature independently resampled at every time step.
Each item is associated with fixed base inflow rate and base margin, which are perturbed by Gaussian noise at each time step, $\epsilon_{\rho,n}^t \sim \mathcal{N}(0,\sigma_\rho^2)$ and $\epsilon_{m,n}^t \sim \mathcal{N}(0,\sigma_m^2)$, respectively.
These base values differ across items and induce persistent heterogeneity in inventory accumulation and profitability.

At each time step, the agent selects a single item $a^t \in \{1,\dots,N\}$ to clear.
Only the selected item generates sales, with realized sales quantity
\[
S_{a^t}^t = \min\{ I_{a^t}^t,\; C \},
\]
where $C$ denotes a global sales capacity.
The immediate revenue is given by
\[
R^t = \alpha \, m_{a^t}^t \, S_{a^t}^t,
\]
with $\alpha$ being a fixed scaling factor.
Inventory evolves deterministically according to
\[
I_n^{t+1}
=
\mathrm{min}\!\left\lbrace\mathrm{max}\left\lbrace
I_n^t + \rho_n^t - \mathbb{I}\{n = a^t\} S_{a^t}^t,0\right\rbrace, 1
\right\rbrace,
\]
where inventories are strictly constrained by capacity.
The reward at time $t$ is defined as
\[
r^t
=
R^t
-
\sum_{n=1}^{N} h_n I_n^{t+1}
-
\sum_{n=1}^{N} \lambda \, \mathbb{I}\{ I_n^{t+1} \geq \tau \},
\]
where the final term imposes a large overflow penalty when an item's inventory
exceeds a threshold $\tau$ close to capacity.

Although this problem appears similar to inventory replenishment at first,
its failure mode is fundamentally different.
DQN tightly couples action values tightly to item-level features, implicitly assuming that the relative desirability of an item
remains stable across system configurations. However, in the presence of severe overflow penalties, this assumption breaks
down. Then, items that are not selected accumulate inventory deterministically and can suddenly incur large negative rewards
once capacity thresholds are exceeded.
Consequently, policies learned by DQN overfit to system-specific overflow patterns rather than learning transferable prioritization rules.
When the number of items or their ordering changes, the learned value function
fails to anticipate which items are likely to trigger overflow. This leads to
frequent capacity violations and sharply degraded performance.
This strong coupling between item features and global overflow dynamics makes
generalization particularly fragile in such environments.

\subsection{Experimental Settings}

We summarize the experimental settings used for the realistic scheduling experiments, including common configurations shared across tasks as well as task-specific system parameters.

\begin{paragraph}{Common Settings}
For the proposed FSP framework, the scheduling principle is parameterized using additive item-level components and pairwise interaction terms.
The model is trained using stochastic gradient descent with a fixed learning rate.
The Whittle-index baseline uses an item-wise index policy estimated from the training system.
The NAM baseline replaces the Q-network in DQN with a neural additive model.
For DQN-based baselines, a standard DQN algorithm is used to approximate the action-value function, taking the concatenated item features as input.
A2C, PPO, TRPO, and QR-DQN use the same concatenated item-feature input as DQN and are implemented using Stable-Baselines and its contributed extensions.
Unless otherwise specified, all policies are trained with $N=10$ items and evaluated under both in-distribution and out-of-distribution settings.

\begin{table}[h]
\caption{Hyperparameter summary for FSP and baseline methods.}
\label{table:hyperparameters}
\begin{center}
\begin{small}
\begin{sc}
\resizebox{\linewidth}{!}{%
\begin{tabular}{ll|ll}
\toprule
\multicolumn{2}{c|}{FSP} & \multicolumn{2}{c}{Whittle-index policy} \\
\midrule
Parameter & Value & Parameter & Value\\
\midrule
Learning rate ($\eta$) & 0.005 & State discretization & $4\times4\times4$ \\
Basis type & B-spline functions & Discount factor & 0.99 \\
1-D basis dimension & 30 & Dirichlet prior & 0.3 \\
2-D basis dimension & 15 & Value-iteration threshold & $10^{-3}$ \\
Trade-off hyperparameter ($\lambda$) & 0.1 & Subsidy-search threshold & $5\times10^{-2}$ \\
L2 regularization & 0.0001 &  &  \\
\midrule
\multicolumn{2}{c|}{NAM} & \multicolumn{2}{c}{DQN} \\
\midrule
Parameter & Value & Parameter & Value\\
\midrule
Q-network type & Feature-additive & Q-Network type & MLP \\
Additive block depth & 1 & Q-Network architecture & (128,128) units \\
Learning rate & 0.001 & Learning rate & 0.001 \\
Discount factor & 0.99 & Discount factor & 0.99 \\
Batch size & 256 & Batch size & 256 \\
Replay buffer & 20,000 & Replay buffer & 100,000 \\
\midrule
\multicolumn{2}{c|}{A2C} & \multicolumn{2}{c}{PPO} \\
\midrule
Parameter & Value & Parameter & Value\\
\midrule
Actor-critic network type & MLP & Actor-critic network type & MLP \\
Actor-critic architecture & (128,128) units & Actor-critic architecture & (128,128) units \\
Learning rate & 0.001 & Learning rate & 0.001 \\
Discount factor & 0.99 & Discount factor & 0.99 \\
 &  & Batch size & 64 \\
\midrule
\multicolumn{2}{c|}{TRPO} & \multicolumn{2}{c}{QR-DQN} \\
\midrule
Parameter & Value & Parameter & Value\\
\midrule
Actor-critic network type & MLP & Q-Network type & Quantile MLP \\
Actor-critic architecture & (128,128) units & Q-Network architecture & (128,128) units \\
Learning rate & 0.001 & Learning rate & 0.001 \\
Discount factor & 0.99 & Discount factor & 0.99 \\
Target KL & 0.01 & Batch size & 32 \\
 & & Replay buffer & 1,000,000 \\
\bottomrule
\end{tabular}}
\end{sc}
\end{small}
\end{center}
\end{table}


\end{paragraph}

\begin{paragraph}{Wireless User Scheduling System Parameters}
The wireless scheduling environment is governed by a set of parameters that control traffic dynamics and performance trade-offs.
Specifically, the mean arrival rate and the shape parameter determine the stochastic characteristics of user traffic arrivals.
The capacity scaling parameter controls the effective service rate, while a throughput weight balances the contribution of achieved throughput to the reward.
Additional penalty coefficients are introduced to penalize queue overflow and excessive delay, respectively, thereby encouraging stable operation under stochastic arrivals.
Together, these parameters define the relative importance of efficiency and delay-related objectives in the wireless scheduling task.
\end{paragraph}

\begin{table}[h]
\caption{System parameters for wireless user scheduling.}
\begin{center}
\begin{small}
\begin{sc}
\begin{tabular}{ll}
\toprule
Parameter & Value\\
\midrule
Mean arrival rate ($\lambda_n$) & 0.045 \\
Shape parameter for arrival ($k$) & 2.0 \\
Capacity scale ($\eta$) & 0.6 \\
Throughput weight ($\omega$) & 1.0 \\
Overflow penalty coefficient ($\lambda_{\mathrm{ov}}$) & 2.0 \\
Delay penalty coefficient ($\lambda_{\mathrm{delay}}$) & 1.0 \\
\bottomrule
\end{tabular}
\end{sc}
\end{small}
\end{center}
\end{table}

\begin{paragraph}{Inventory Replenishment System Parameters}
The inventory replenishment environment is governed by a set of parameters that control replenishment dynamics, demand uncertainty, and economic trade-offs.
Specifically, the replenishment quantity determines the amount of inventory injected into the system at each decision step, while the demand granularity parameter controls the resolution of stochastic demand realizations.
The demand and margin drift parameters introduce temporal variability in item characteristics, capturing non-stationary demand and pricing conditions.
The ranges of unit cost, holding cost, and stock-out penalty define the item-specific economic factors that shape the trade-off between inventory availability and operational cost.
An overflow penalty coefficient discourages excessive inventory accumulation beyond capacity.
Finally, a set of reward weights balances revenue, ordering cost, holding cost, stock-out penalties, and overflow penalties. This defines the relative importance of competing economic objectives in the inventory replenishment task.
\end{paragraph}

\begin{table}[h]
\caption{System parameters for inventory replenishment.}
\begin{center}
\begin{small}
\begin{sc}
\begin{tabular}{ll}
\toprule
Parameter & Value \\
\midrule
Replenishment quantity ($Q$) & 0.5 \\
Demand granularity ($K_d$) & 50 \\
Mean demand drift std.\ ($\sigma_\mu$) & 0.02 \\
Margin drift std.\ ($\sigma_m$) & 0.003 \\
Unit cost range ($c_n$) & $[0.02,\,0.5]$ \\
Holding cost range ($h_n$) & $[0.01,\,0.10]$ \\
Stock-out penalty range ($s_n$) & $[0.05,\,0.30]$ \\
Overflow penalty coefficient ($\lambda$) & 1.0 \\
Reward weights $\{w_{\mathrm{rev}},w_{\mathrm{ord}},w_{\mathrm{hold}},w_{\mathrm{so}},w_{\mathrm{ov}}\}$ & \{2.0, 1.4, 3.0, 5.0, 3.5\} \\
\bottomrule
\end{tabular}
\end{sc}
\end{small}
\end{center}
\end{table}

\begin{paragraph}{Warehouse Clearance System Parameters}
The warehouse clearance environment is governed by a set of parameters that control global sales capacity, inventory inflow dynamics, and cost--revenue trade-offs.
Specifically, the global sales capacity determines the total amount of inventory that can be cleared at each time step and scales linearly with the number of items, reflecting system-level resource constraints.
Stochastic drift parameters for inflow rates and margins introduce temporal variability in inventory accumulation and economic value.
A revenue scaling factor controls the contribution of realized sales to the overall reward, while a holding cost penalizes prolonged inventory accumulation.
An overflow threshold defines the inventory level beyond which excessive stock is incurred, and an associated overflow penalty coefficient discourages persistent congestion.
Together, these parameters define the balance between aggressive clearance for immediate revenue and conservative inventory management to avoid overflow penalties in the warehouse clearance task.
\end{paragraph}

\begin{table}[h]
\caption{System parameters for warehouse clearance.}
\begin{center}
\begin{small}
\begin{sc}
\begin{tabular}{ll}
\toprule
Parameter & Value \\
\midrule
Global sales capacity ($C$) & $0.45+0.01\cdot N$ \\
Inflow rate drift std.\ ($\sigma_\rho$) & 0.01 \\
Margin drift std.\ ($\sigma_m$) & 0.01 \\
Revenue scaling factor ($\alpha$) & 5.0 \\
Holding cost ($h$) & 0.1 \\
Overflow threshold ($\tau$) & 1.0 \\
Overflow penalty coefficient ($\lambda$) & 5.0 \\
\bottomrule
\end{tabular}
\end{sc}
\end{small}
\end{center}
\end{table}

\section{Additional Results for Realistic Experiments}
\label{appendix:additional_results_realistic}
This appendix provides additional, detailed results for the realistic scheduling tasks, which support the realistic experiment in Section \ref{subsec:realistic_experiment}.
In the performance results, the training system of size $N$ is denoted by $N^{\mathrm{ID}}$, indicating which training system is used for training each policy.
The FSP policy is trained only once by using the training system with $N=10$, and therefore, there is only FSP(10$^{\mathrm{ID}}$).
In contrast, the DQN policies are individually trained for each value of $N$ due to a lack of transferability.
Thus, there are four DQN policies trained at different values of $N$, and each DQN policy cannot be applied to a system size different from the value of $N$ for which it was trained.
In addition, to eliminate the disturbance from the uninformative random feature, we consider the DQN policy that excludes the random feature and uses only the three task-relevant state variables, which is denoted by DQN(w/o noise).
The expanded comparison with Whittle, NAM, A2C, PPO, TRPO, and QR-DQN is summarized in the main text in Table~\ref{table:transfer_results_realistic}. The appendix tables below focus on the detailed FSP--DQN ID/OOD breakdown from the original transfer analysis, including the DQN variant without the uninformative feature.

\subsection{Wireless User Scheduling}

\begin{table}[h]
\caption{In-distribution rewards and out-of-distribution average rewards of the wireless user scheduling task under varying $N$ with different system populations. (For each $N$, evaluation is conducted across 100 system instances.)}
\label{table:transfer_results_wireless_appendix}
\setlength{\tabcolsep}{3pt}
\begin{center}
\begin{small}
\begin{sc}
\begin{tabular}{ccccccc}
\toprule
$N$ & FSP($10^{\mathrm{ID}}$) & DQN($5^{\mathrm{ID}}$) & DQN($10^{\mathrm{ID}}$) & DQN($15^{\mathrm{ID}}$) & DQN($20^{\mathrm{ID}}$)  & DQN($10^{\mathrm{ID}}$, w/o noise)  \\
\midrule
$5^{\mathrm{ID}}$ & 0.116 & 0.127 &  - & - & - & - \\
$10^{\mathrm{ID}}$ & 0.109 & - & 0.112 & - & - & 0.122 \\
$15^{\mathrm{ID}}$ & -0.266 & - & - & -0.297 & - & - \\
$20^{\mathrm{ID}}$ & -0.762 & - & - & - & -0.855 & - \\
\midrule
5 & 0.085 $\pm$ 0.001 & -0.073 $\pm$ 0.002 & - & - & - & - \\
10 & 0.103 $\pm$ 0.001 & - & -1.313 $\pm$ 0.003 & - & - & -1.046 $\pm$ 0.001 \\
15 & -0.285 $\pm$ 0.002 & - & - & -1.965 $\pm$ 0.002  & - & - \\
20 & -0.781 $\pm$ 0.001 & - & - & - & -2.609 $\pm$ 0.001 & - \\
\bottomrule
\end{tabular}
\end{sc}
\end{small}
\end{center}
\end{table}

Table \ref{table:transfer_results_wireless_appendix} reports additional results for the wireless user scheduling task, providing a detailed comparison across different training system sizes and system populations.
The upper block reports in-distribution (ID) rewards, where each DQN policy is evaluated on the training system with $N$ on which it is trained. The FSP policy is evaluated on the training systems with all values of $N$.
The lower block reports out-of-distribution (OOD) evaluations under varying $N$. Note that each DQN policy is applicable only to the specific system size for which it is trained.

From the ID results, we observe that the DQN policies are not poorly trained: each DQN achieves strong performance on its corresponding training system.
The FSP policy trained on the $10^{\mathrm{ID}}$ system also achieves ID performance comparable to that of the DQN policies across different $N$'s, indicating that it does not sacrifice performance on the training system in exchange for transferability.

In contrast, the OOD results reveal a clear difference in generalization behavior.
While the performance of DQN policies deteriorates substantially when evaluated on unseen system sizes or populations, the FSP policy maintains stable performance across all values of $N$.
As a result, the FSP policy consistently outperforms the corresponding DQN policies in OOD settings, with the performance gap increasing as the system size grows.

We further provide the results of the DQN policy trained without the random noise feature.
Although this variant achieves comparable ID performance to the standard DQN, it exhibits the same failure to generalize in OOD evaluations.
This confirms that the lack of transferability in DQN is not due to noisy input features, but rather reflects a structural limitation of the policy representation.

Overall, these additional results reinforce the main finding that the scheduling principle learned by FSP is robust to changes in both system size and user population, whereas standard DQN policies remain tightly coupled to their training configurations.

Figures \ref{fig:wireless_phi} and \ref{fig:wireless_3d} provide the complete visualization of the learned additive and pairwise interaction components of the FSP policy for the wireless user scheduling task.
The learned components exhibit structured patterns across the feature domain, implying that the FSP framework captures coherent prioritization rules.

\begin{figure}[h]
  \centering


  \begin{subfigure}{0.235\linewidth}
    \centering
    \includegraphics[width=\linewidth]{figs/wireless_phi1.pdf}
    \subcaption{$\phi_1$}
\label{fig:wireless_phi_1}
  \end{subfigure}\hfill
  \begin{subfigure}{0.235\linewidth}
    \centering
    \includegraphics[width=\linewidth]{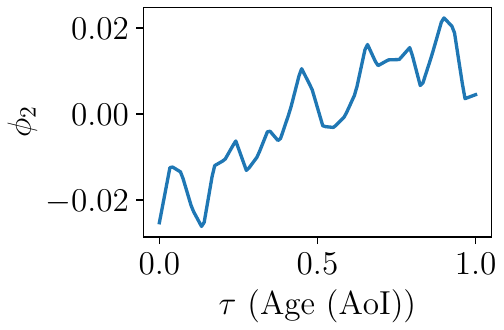}
    \subcaption{$\phi_2$}
\label{fig:wireless_phi_2}
  \end{subfigure}\hfill
  \begin{subfigure}{0.235\linewidth}
    \centering
    \includegraphics[width=\linewidth]{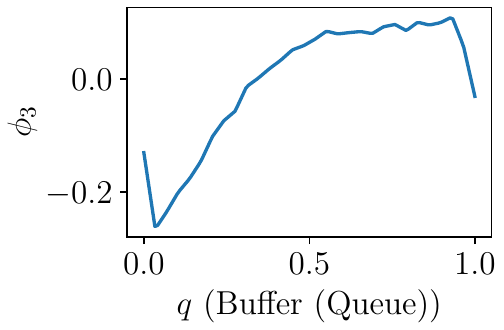}
    \subcaption{$\phi_3$}
\label{fig:wireless_phi_3}
  \end{subfigure}\hfill
  \begin{subfigure}{0.235\linewidth}
    \centering
    \includegraphics[width=\linewidth]{figs/wireless_phi4.pdf}
    \subcaption{$\phi_4$}
\label{fig:wireless_phi_4}
  \end{subfigure}

  \caption{One-dimensional component functions of the FSP policy in the wireless user scheduling task.}
  \label{fig:wireless_phi}
\end{figure}

Figure \ref{fig:wireless_phi} summarizes the one-dimensional additive components associated with individual features.
The components corresponding to informative system variables show clear and interpretable trends.
As expected from the reward structure, in Figure \ref{fig:wireless_phi_1}, $\phi_1(h)$ increases strongly and monotonically with the channel condition. In Figure \ref{fig:wireless_phi_2}, $\phi_2(\tau)$, that corresponds to the age-of-information, increases, reflecting gradual changes in scheduling priority, but its effect is relatively small. Similarly, in Figure \ref{fig:wireless_phi_3}, $\phi_3(q)$ increases smoothly with the buffer state, but its trend is not completely monotonic as in $\phi_1(h)$.
In contrast, the component associated with the intentionally uninformative noise feature, $\phi_4(\xi)$, in Figure \ref{fig:wireless_phi_4} remains nearly flat, confirming that the learned principle effectively suppresses irrelevant information.

\begin{figure}[h]
  \centering
  \begin{subfigure}{0.235\linewidth}
    \centering
    \includegraphics[width=\linewidth]{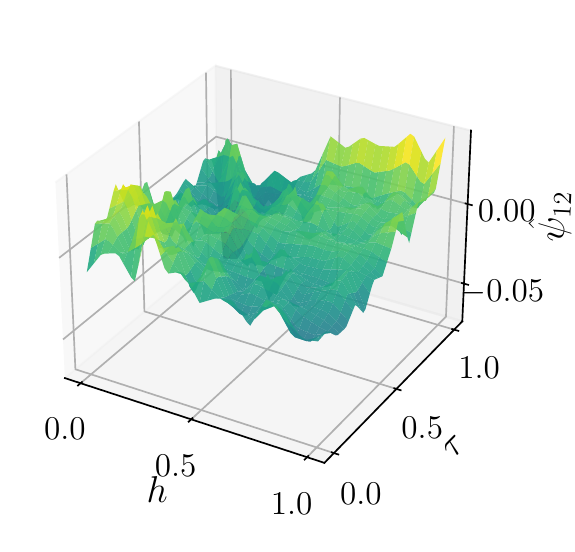}
    \subcaption{$\psi_{12}$}
  \label{fig:wireless_3d_12}
  \end{subfigure}\hspace{1em}
  \begin{subfigure}{0.235\linewidth}
    \centering
    \includegraphics[width=\linewidth]{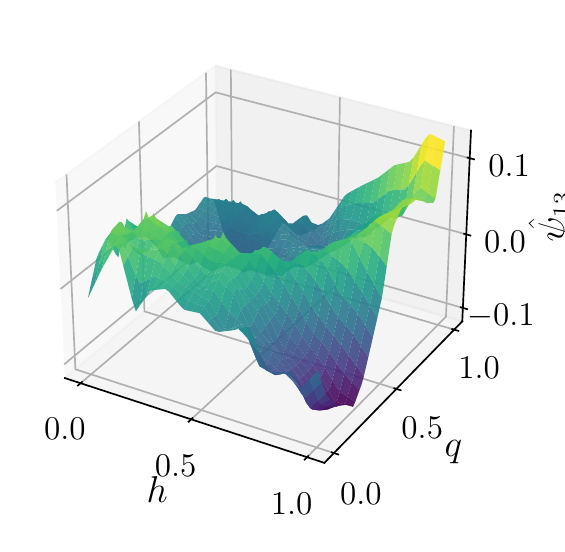}
    \subcaption{$\psi_{23}$}
  \label{fig:wireless_3d_23}
  \end{subfigure}
  \caption{Two-dimensional pairwise interaction components of the FSP policy in the wireless user scheduling task.}
  \label{fig:wireless_3d}
\end{figure}

Figure \ref{fig:wireless_3d} visualizes the learned two-dimensional pairwise interaction components, which capture joint effects beyond purely additive contributions.
The interaction surfaces are structured across the feature space, indicating consistent modulation of priorities based on combined feature states.
In Figure \ref{fig:wireless_3d_12}, while no sharply separated patterns are observed, the interaction reveals that users with large age-of-information and good channel conditions tend to receive higher priority.
Figure \ref{fig:wireless_3d_23} exhibits a clearer structure. Users with sufficiently occupied buffers and good channel conditions are assigned high priority, whereas users with poor channel conditions receive low priority even when their buffers are full.
Moreover, when the buffer is nearly empty, the priority remains low, regardless of the channel condition, indicating that the policy avoids allocating resources to users with little backlog. Especially, the priority is significantly lower with good channel conditions.
This behavior is consistent with the one-dimensional channel component in Figure \ref{fig:wireless_phi_1}, where good channel quality already contributes positively through $\phi_1(h)$, and the interaction term further modulates this effect based on the buffer state.

Taken together, these visualizations provide a complete view of the learned scheduling principle.
By jointly examining the one-dimensional additive components and the two-dimensional interaction terms, we can understand prioritization behaviors in the scheduling principle that cannot be captured by either component alone.
This decomposition highlights how the FSP framework combines simple, interpretable effects into a coherent policy that captures more complex decision logic while maintaining stability and interpretability across the entire feature domain.

\subsection{Inventory Replenishment}

Table \ref{table:transfer_results_inventory_appendix} reports additional results for the inventory replenishment task, comparing the ID and OOD performance across different system sizes and system populations.
The upper block reports ID rewards, where each DQN policy is evaluated on the training system with $N$ on which it is trained. The lower block reports OOD evaluations under varying $N$. Each DQN policy is applicable only to the specific system size for which it is trained.

\begin{table}[h]
\caption{In-distribution rewards and out-of-distribution average rewards of the inventory replenishment task under varying $N$ with different system populations. (For each $N$, evaluation is conducted across 100 system instances.)}
\label{table:transfer_results_inventory_appendix}
\setlength{\tabcolsep}{3pt}
\begin{center}
\begin{small}
\begin{sc}
\begin{tabular}{ccccccc}
\toprule
$N$ & FSP($10^{\mathrm{ID}}$) & DQN($5^{\mathrm{ID}}$) & DQN($10^{\mathrm{ID}}$) & DQN($15^{\mathrm{ID}}$) & DQN($20^{\mathrm{ID}}$)  & DQN($10^{\mathrm{ID}}$, w/o noise)  \\
\midrule
$5^{\mathrm{ID}}$ & -0.232 & -0.255 &  - & - & - & - \\
$10^{\mathrm{ID}}$ & -0.353 & - & -0.355 & - & - & -0.363 \\
$15^{\mathrm{ID}}$ & -0.649 & - & - & -0.670 & - & - \\
$20^{\mathrm{ID}}$ & -0.769 & - & - & - & -0.751 & - \\
\midrule
5 & -0.206 $\pm$ 0.011 & -0.228 $\pm$ 0.012 & - & - & - & - \\
10 & -0.412 $\pm$ 0.008 & - & -0.429 $\pm$ 0.008 & - & - & -0.420 $\pm$ 0.008 \\
15 & -0.712 $\pm$ 0.008 & - & - & -0.727 $\pm$ 0.008  & - & - \\
20 & -0.726 $\pm$ 0.008 & - & - & - & -0.722 $\pm$ 0.008 & - \\
\bottomrule
\end{tabular}
\end{sc}
\end{small}
\end{center}
\end{table}

From the ID results, both FSP and DQN policies achieve comparable performance on their respective training systems. This indicates that all policies are sufficiently trained for the inventory replenishment task.
The FSP policy trained on the $10^{\mathrm{ID}}$ system performs similarly to the DQN policies trained on their corresponding system sizes, regardless of the value of $N$. This shows the transferability of the FSP policy.

Unlike the wireless user scheduling task, the OOD results for inventory replenishment exhibit a different generalization pattern.
For a fixed system size $N$, both FSP and DQN policies maintain similar performance when evaluated on different system populations. This suggests that the task admits a degree of transferability across population variations.
This behavior is consistent with the quasi-additive structure of the inventory replenishment problem, where item-level decisions are weakly coupled.
Across different values of $N$, the FSP policy maintains stable performance despite being trained only on the $10^{\mathrm{ID}}$ system.
In contrast, DQN policies generalize reasonably well across system populations for a fixed $N$. However, they remain tied to their respective training system sizes and cannot be transferred across different values of $N$.

We also present the DQN variant trained without the noise feature.
Its performance is comparable to that of the standard DQN in both ID and OOD evaluations. This confirms that the observed generalization behavior is driven by the underlying task structure.

These results contrast sharply with those of the wireless user scheduling task. Although both FSP and DQN can generalize across system populations with a fixed system size in inventory replenishment, only FSP exhibits robustness to changes in the system size $N$, which highlights the benefit of learning a reusable scheduling principle.

Figures \ref{fig:inventory_phi} and \ref{fig:inventory_3d} provide the complete visualization of the learned additive and pairwise interaction components of the FSP policy for the inventory replenishment task.
The learned components exhibit structured and economically meaningful patterns across the feature domain, indicating that the FSP framework captures coherent replenishment priorities aligned with the underlying cost and reward structure.

\begin{figure}[h]
  \centering


  \begin{subfigure}{0.235\linewidth}
    \centering
    \includegraphics[width=\linewidth]{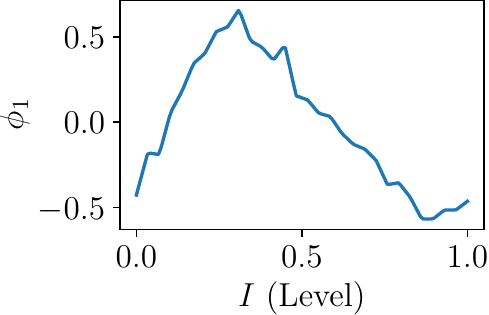}
    \subcaption{$\phi_1$}
  \label{fig:inventory_phi_1}
  \end{subfigure}\hfill
  \begin{subfigure}{0.235\linewidth}
    \centering
    \includegraphics[width=\linewidth]{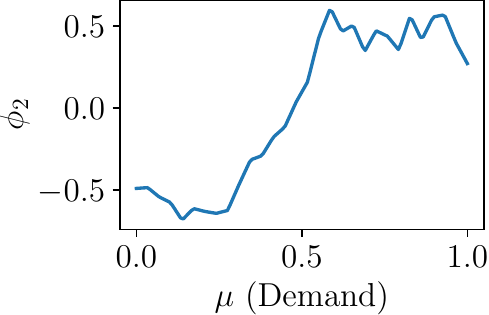}
    \subcaption{$\phi_2$}
  \label{fig:inventory_phi_2}
  \end{subfigure}
  \begin{subfigure}{0.235\linewidth}
    \centering
    \includegraphics[width=\linewidth]{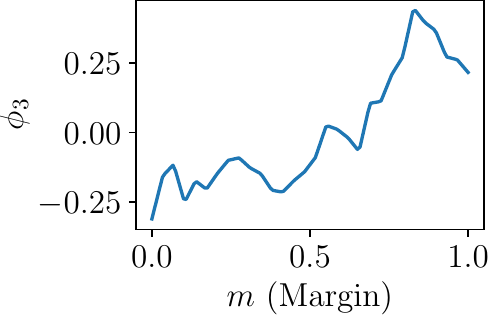}
    \subcaption{$\phi_3$}
  \label{fig:inventory_phi_3}
  \end{subfigure}\hfill
  \begin{subfigure}{0.235\linewidth}
    \centering
    \includegraphics[width=\linewidth]{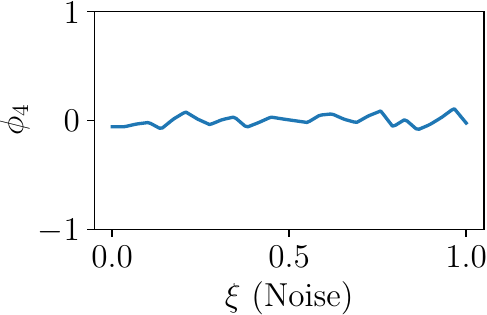}
    \subcaption{$\phi_4$}
  \label{fig:inventory_phi_4}
  \end{subfigure}

  \caption{One-dimensional component functions of the FSP policy in the inventory replenishment environment.}
  \label{fig:inventory_phi}
\end{figure}

Figure \ref{fig:inventory_phi} summarizes the one-dimensional additive components associated with individual state variables.
The relative scale of the components suggests that demand-related information plays the dominant role in the learned scheduling principle, followed by inventory level and unit margin.
Specifically, in Figure \ref{fig:inventory_phi_1}, the inventory-level component $\phi_1$ exhibits an increasing-then-decreasing trend, indicating that items with low inventory are prioritized for replenishment, while items with excessive inventory are systematically deprioritized.
The demand-related component $\phi_2$ in Figure \ref{fig:inventory_phi_2} increases monotonically, reflecting a clear preference for items with higher expected demand.
Similarly, the margin-related component $\phi_3$ in Figure \ref{fig:inventory_phi_2} shows a positive correlation, indicating that higher-margin items receive higher priority when other factors are comparable.
In contrast, Figure \ref{fig:inventory_phi_3} shows that the component associated with the intentionally uninformative random feature, $\phi_4$, does not display a clear monotonic structure, confirming that the learned principle assigns negligible importance to irrelevant state dimensions.

\begin{figure}[h]
  \centering
  \begin{subfigure}{0.235\linewidth}
    \centering
    \includegraphics[width=\linewidth]{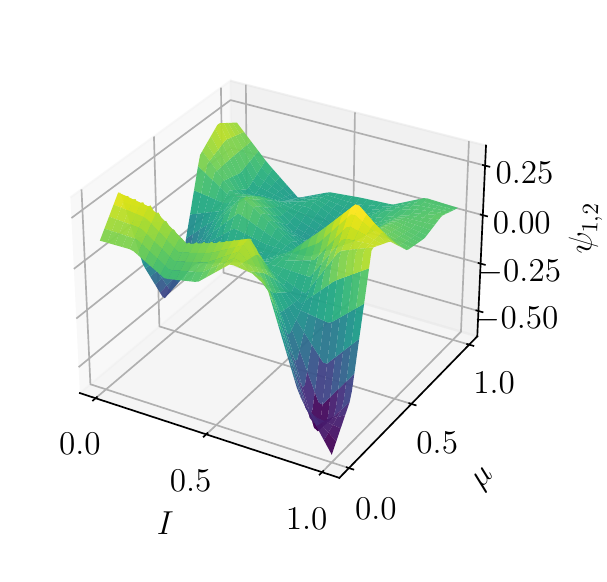}
    \subcaption{$\psi_{12}$}
  \label{fig:inventory_3d_12}
  \end{subfigure}\hspace{1em}
  \begin{subfigure}{0.235\linewidth}
    \centering
    \includegraphics[width=\linewidth]{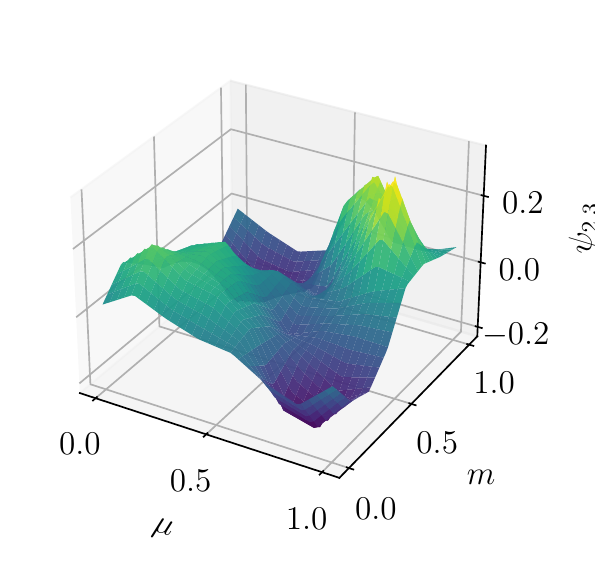}
    \subcaption{$\psi_{23}$}
  \label{fig:inventory_3d_23}
  \end{subfigure}
  \caption{Learned two-dimensional interaction components $\psi_{ij}$ in the inventory replenishment environment, visualized as surfaces over the corresponding state dimensions.}
  \label{fig:inventory_3d}
\end{figure}

Figure \ref{fig:inventory_3d} visualizes the learned two-dimensional pairwise interaction components, which capture non-additive effects between state variables.
The interaction surfaces exhibit smooth and structured patterns across the feature space, indicating consistent modulation of priorities based on joint state configurations.
In Figure \ref{fig:inventory_3d_12}, the interaction between inventory level and demand reveals that demand sensitivity is amplified when inventory is scarce, leading to strong prioritization of high-demand items to avoid stockouts.
Conversely, when inventory is abundant and demand is low, the interaction becomes negative, discouraging further replenishment due to holding and overflow costs.
Figure \ref{fig:inventory_3d_23} shows that the importance of demand is further modulated by unit margin: high-demand items receive increasing priority as margin grows, while a mild positive interaction in low-demand, low-margin regions reflects a tendency to clear inventory to mitigate future holding risks.

Taken together, these visualizations offer an integrated view of the learned replenishment strategy.
Considering the one-dimensional components alongside the pairwise interaction terms reveals how the policy jointly accounts for demand intensity, inventory dynamics, and revenue potential in its decision-making.
This structured decomposition demonstrates that the FSP framework constructs a consistent and interpretable policy by combining simple effects in a way that captures richer economic trade-offs, while remaining stable across different systems.

\subsection{Warehouse Clearance}

Table \ref{table:transfer_results_clearance_appendix} reports additional transfer results for the warehouse clearance task under varying system sizes and system populations.
The upper block of the table reports the ID rewards, where each policy is evaluated on the system size on which it is trained, while the lower block reports OOD evaluations across different values of $N$.
As in previous experiments, each DQN policy is applicable only to the specific system size for which it is trained.

\begin{table}[h]
\caption{In-distribution rewards and out-of-distribution average rewards of the warehouse clearance task under varying $N$ with different system populations. (For each $N$, evaluation is conducted across 100 system instances.)}
\label{table:transfer_results_clearance_appendix}
\setlength{\tabcolsep}{3pt}
\begin{center}
\begin{small}
\begin{sc}
\begin{tabular}{ccccccc}
\toprule
$N$ & FSP($10^{\mathrm{ID}}$) & DQN($5^{\mathrm{ID}}$) & DQN($10^{\mathrm{ID}}$) & DQN($15^{\mathrm{ID}}$) & DQN($20^{\mathrm{ID}}$)  & DQN($10^{\mathrm{ID}}$, w/o noise)  \\
\midrule
$5^{\mathrm{ID}}$ & 0.736 & 0.772 &  - & - & - & - \\
$10^{\mathrm{ID}}$ & 0.823 & - & 0.883 & - & - & 0.881 \\
$15^{\mathrm{ID}}$ & 0.739 & - & - & 0.808 & - & - \\
$20^{\mathrm{ID}}$ & 0.723 & - & - & - & 0.766 & - \\
\midrule
5 & 0.630 $\pm$ 0.021 & -10.913 $\pm$ 1.374 & - & - & - & - \\
10 & 0.738 $\pm$ 0.019 & - & -35.238 $\pm$ 1.931 & - & - & -38.247 $\pm$ 2.461 \\
15 & 0.712 $\pm$ 0.014 & - & - & -16.747 $\pm$ 2.807  & - & - \\
20 & 0.649 $\pm$ 0.023 & - & - & - & -9.063 $\pm$ 1.913 & - \\
\bottomrule
\end{tabular}
\end{sc}
\end{small}
\end{center}
\end{table}

From the ID results, we see that both FSP and DQN policies achieve strong performance on their respective training systems. The DQN policies are not undertrained and slightly outperform the FSP policy.
The FSP policy trained on the $10^{\mathrm{ID}}$ system achieves competitive ID performance relative to the DQN policies. This confirms that it does not highly sacrifice ID optimality.
However, the OOD results reveal a drastic difference in generalization behavior.
When evaluated on unseen item populations, the performance of DQN policies deteriorates dramatically, often resulting in extremely large negative rewards.
This degradation is particularly severe due to overflow penalties, which amplify the consequences of poor prioritization decisions when the policy fails to adapt to changes in feature distributions.
In contrast, the FSP policy maintains stable and consistently positive performance across all values of $N$, despite being trained only on the $10^{\mathrm{ID}}$ system.
As a result, the FSP policy significantly outperforms the corresponding DQN policies in all OOD settings, demonstrating robust transferability across system sizes and populations.

As in the previous environments, we provide a DQN variant trained without the noise feature.
While its ID performance remains comparable to that of the standard DQN, it exhibits the same catastrophic performance degradation in OOD evaluations.
This confirms that the failure of DQN to generalize in this task is not caused by noisy input features, but rather reflects a structural limitation in handling overflow-sensitive dynamics.

At first glance, this task seems similar to the inventory replenishment task, but these additional results reinforce the findings observed in the wireless user scheduling task.
When the task involves strong nonlinear penalties and tight coupling between system size and decision quality, transferable scheduling principles learned by the FSP framework provide substantial robustness advantages over standard DQN policies.

Figures \ref{fig:clearance_phi} and \ref{fig:clearance_3d} provide the complete visualization of the learned additive and pairwise interaction components of the FSP policy for the warehouse clearance task.

\begin{figure}[h]
  \centering


  \begin{subfigure}{0.235\linewidth}
    \centering
    \includegraphics[width=\linewidth]{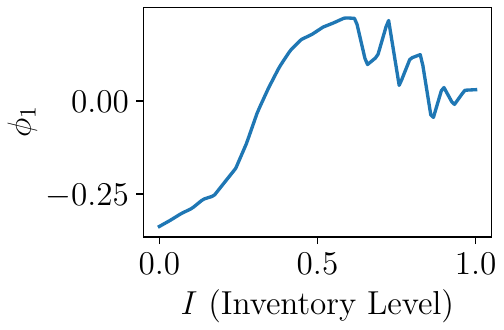}
    \subcaption{$\phi_1$}
  \label{fig:clearance_phi_1}
  \end{subfigure}\hfill
  \begin{subfigure}{0.235\linewidth}
    \centering
    \includegraphics[width=\linewidth]{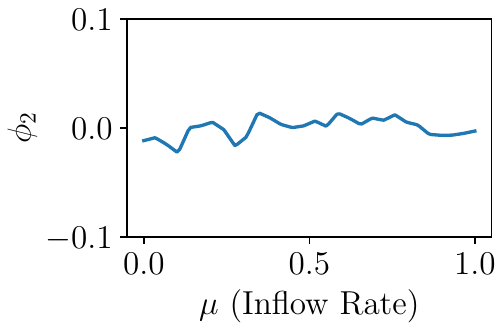}
    \subcaption{$\phi_2$}
  \label{fig:clearance_phi_2}
  \end{subfigure}
  \begin{subfigure}{0.235\linewidth}
    \centering
    \includegraphics[width=\linewidth]{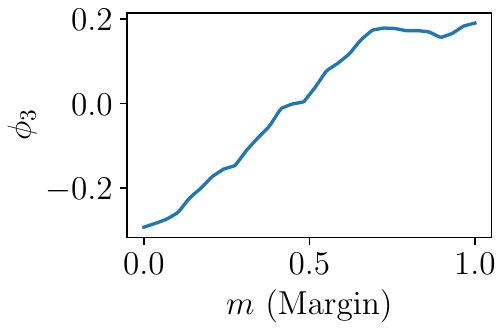}
    \subcaption{$\phi_3$}
  \label{fig:clearance_phi_3}
  \end{subfigure}\hfill
  \begin{subfigure}{0.235\linewidth}
    \centering
    \includegraphics[width=\linewidth]{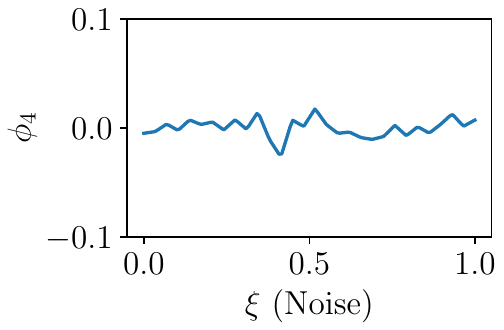}
    \subcaption{$\phi_4$}
  \label{fig:clearance_phi_4}
  \end{subfigure}

  \caption{One-dimensional component functions of the FSP policy in the warehouse clearance environment.}
  \label{fig:clearance_phi}
\end{figure}

Figure \ref{fig:clearance_phi} summarizes the one-dimensional additive components associated with individual state variables.
From Figure~\ref{fig:clearance_phi_1}, we can see that the inventory-level component $\phi_1(I)$ increases for small to moderate inventory levels and then slightly decreases as the inventory approaches the overflow threshold.
This non-monotonic shape indicates the non-purely additive effect of the inventory level and the presence of interaction effects.
Figure~\ref{fig:clearance_phi_2} corresponds to the inflow rate. The inflow rate component $\phi_2(\mu)$ has a small magnitude and lacks a clear monotonic trend. This suggests that inflow information plays a limited independent role.
This behavior implies that the current inventory level provides a more immediate and dominant signal of overflow risk compared with the predictive role of inflow at the decision time.
In contrast, the margin-related component $\phi_3(m)$ shown in Figure~\ref{fig:clearance_phi_3} increases smoothly with margin, reflecting a consistent preference for higher-margin items when other state variables are comparable.
Finally, in Figure~\ref{fig:clearance_phi_4}, the component associated with the intentionally uninformative noise feature, $\phi_4(\xi)$, remains nearly flat, confirming that the learned principle suppresses irrelevant information.

\begin{figure}[h]
  \centering
  \begin{subfigure}{0.235\linewidth}
    \centering
    \includegraphics[width=\linewidth]{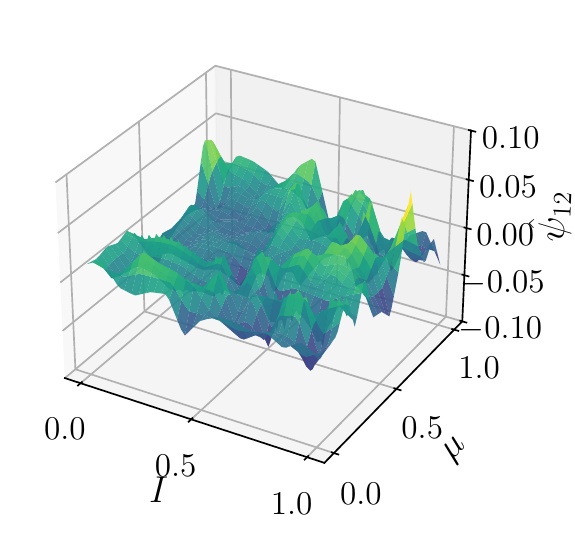}
    \subcaption{$\psi_{12}$}
  \label{fig:clearance_3d_12}
  \end{subfigure}\hspace{1em}
  \begin{subfigure}{0.235\linewidth}
    \centering
    \includegraphics[width=\linewidth]{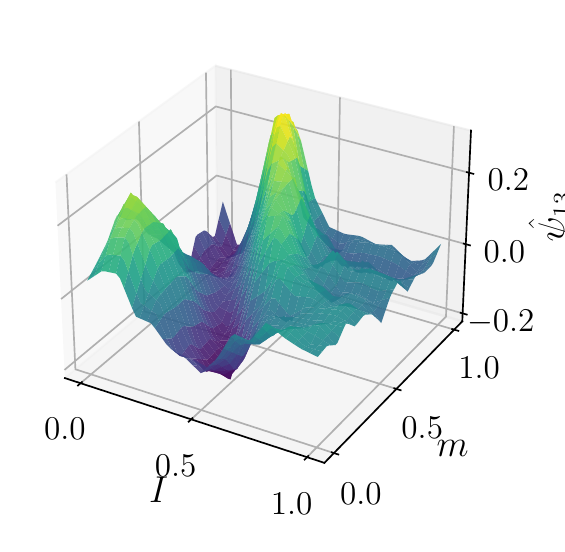}
    \subcaption{$\psi_{13}$}
  \label{fig:clearance_3d_13}
  \end{subfigure}\hspace{1em}
  \begin{subfigure}{0.235\linewidth}
    \centering
    \includegraphics[width=\linewidth]{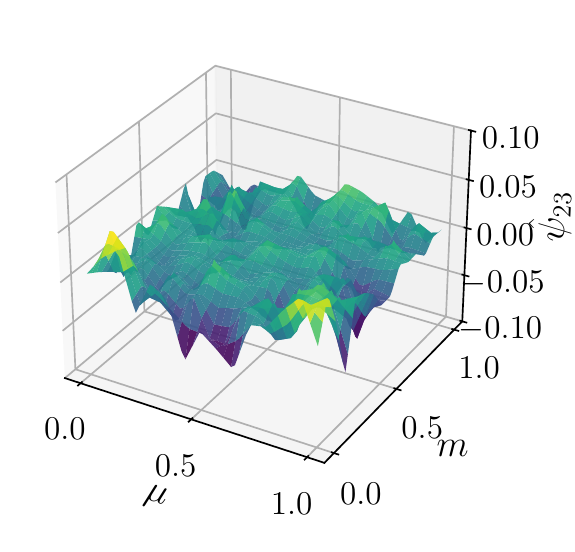}
    \subcaption{$\psi_{23}$}
  \label{fig:clearance_3d_23}
  \end{subfigure}
  \caption{Learned two-dimensional interaction components $\psi_{ij}$ in the warehouse clearance environment, visualized as surfaces over the corresponding state dimensions.}
  \label{fig:clearance_3d}
\end{figure}

Figure \ref{fig:clearance_3d} visualizes the learned two-dimensional pairwise interaction components that capture joint effects beyond purely additive contributions.
The interaction surfaces are structured across the feature space, indicating systematic modulation of priorities based on combined state configurations.
Figures \ref{fig:clearance_3d_12} and \ref{fig:clearance_3d_23} show that interactions involving the inflow rate have weak structure, which is consistent with the limited role of inflow observed in the one-dimensional components.
In contrast, the interaction between inventory level and margin, shown in Figure \ref{fig:clearance_3d_13}, exhibits an interpretable pattern that complements the non-monotonic behavior of the inventory-level component $\phi_1(I)$.
For moderate inventory levels, items with higher margins are given significantly higher priority, whereas items with low margins are deprioritized. While the one-dimensional component $\phi_1(I)$ alone appears to have a higher priority around moderate inventory levels, the interaction reveals that this preference is conditional on margin.
When inventory levels are low, even high-margin items are assigned low priorities. This implies that the policy avoids allocating clearance capacity to items that do not generate sufficient revenue due to low inventory levels.
Together with the shape of $\phi_1(I)$, this interaction indicates that the influence of inventory levels is not purely additive, but is selectively amplified or suppressed depending on margin.

These visualizations provide an integrated view of the learned clearance strategy.
Examining the one-dimensional additive components together with the pairwise interaction terms clarifies how the policy coordinates inventory pressure and revenue signals in its prioritization decisions.
This decomposition illustrates that the FSP framework constructs a coherent and interpretable policy by combining simple effects to represent more complex decision logic, while maintaining transferability.

\end{document}